\newlength{\defbaselineskip}
\newlength{\myfcwidth}
\newlength{\mydiagwidth}
\newtheorem{theorem}{Theorem}[section]
\newtheorem{lemma}[theorem]{Lemma}
\newtheorem{proposition}[theorem]{Proposition}
\newtheorem{definition}[theorem]{Definition}
\newtheorem{example}{Example}[section]
\DeclareMathOperator\acosh{acosh}
\DeclareMathOperator\asin{asin}
\DeclareMathOperator\tr{trace}
\newcommand{\R}{\mathbb{R}}
\begin{document}

\title{Representation Tradeoffs for Hyperbolic Embeddings}

\author[$\ddagger$]{Christopher De Sa}
\author[$\dagger$]{Albert Gu}
\author[$\dagger$]{Christopher R{\'e}}
\author[$\dagger$]{Frederic Sala}
\affil[$\dagger$]{Department of Computer Science, Stanford University}
\affil[$\ddagger$]{Department of Computer Science, Cornell University\vspace{4pt}}
\affil[ ]{\footnotesize{\texttt{cdesa@cs.cornell.edu}, \texttt{albertgu@stanford.edu}, \texttt{chrismre@cs.stanford.edu},\texttt{fredsala@cs.stanford.edu}}}

\maketitle

\begin{abstract}
Hyperbolic embeddings offer excellent quality with few dimensions when embedding hierarchical data structures like synonym or type hierarchies. Given a tree, we give a combinatorial construction that embeds the tree in hyperbolic space with arbitrarily low distortion without using optimization. On WordNet, our combinatorial embedding obtains a mean-average-precision of $0.989$ with only two dimensions, while Nickel et al.'s recent construction obtains $0.87$ using $200$ dimensions.  We provide upper and lower bounds that allow us to characterize the precision-dimensionality tradeoff inherent in any hyperbolic embedding. To embed general metric spaces, we propose a hyperbolic generalization of multidimensional scaling (h-MDS). We show how to perform exact recovery of hyperbolic points from distances, provide a perturbation analysis, and give a recovery result that allows us to reduce dimensionality. The h-MDS approach offers consistently low distortion even with few dimensions across several datasets. Finally, we extract lessons from the algorithms and theory above to design a PyTorch-based implementation that can handle incomplete information and is scalable.

\end{abstract}

\section{Introduction}
\label{sec:introduction}
Recently, hyperbolic embeddings have been proposed as a way to capture
hierarchy information for use in link prediction and natural language
processing tasks~\cite{fb, ucl}. These approaches are an exciting new
way to fuse rich structural information (for example, from knowledge graphs or
synonym hierarchies) with the continuous representations favored by
modern machine learning.

To understand the intuition behind hyperbolic embeddings' superior
capacity, note that trees can be embedded with arbitrarily low
distortion into the Poincar\'e disk, a model of hyperbolic space
with only two dimensions~\cite{sarkar}. In contrast, Bourgain's
theorem \cite{Lineal} shows that Euclidean space is unable to obtain
comparably low distortion for trees---even using an unbounded number of
dimensions.
Moreover, hyperbolic space can preserve certain properties;
for example, angles between embedded vectors are the same in
both Euclidean space and the Poincar\'e model (the mapping is conformal),
which suggests embedded data may be easily able to integrate with downstream
tasks. 

Many graphs, such as complex networks~\cite{krioukov2010hyperbolic}, including the Internet~\cite{krioukov2009curvature} and social networks~\cite{verbeek2016metric}) are known to have hyperbolic structure and thus befit hyperbolic embeddings. Recent works show that hyperbolic representations are indeed suitable for many hierarchies (e.g, the question answering system HyperQA proposed in \cite{tay2018hyperbolic}, vertex classifiers in \cite{ucl}, and link prediction \cite{fb}). However, the optimization problem underlying these embeddings is
challenging, and we seek to understand the subtle tradeoffs involved.

We begin by considering the situation in which we are given an input
graph that is a tree or nearly tree-like, and our goal is to produce
a low-dimensional hyperbolic embedding that preserves all distances. This
leads to a simple strategy that is combinatorial in that it does not
minimize a surrogate loss function using gradient descent. It is both
fast (nearly linear time) and has formal quality guarantees. The
approach proceeds in two phases: (1) we produce an embedding of a
graph into a weighted tree, and (2) we embed that tree into the
hyperbolic disk. In particular, we consider an extension of an elegant embedding of trees
into the Poincar\'e disk by Sarkar~\cite{sarkar} and recent work on
low-distortion graph embeddings into tree metrics~\cite{Abraham}. For trees, this approach has nearly perfect
quality. On the WordNet hypernym graph reconstruction, this obtains
nearly perfect mean average precision (MAP) $0.989$ using just two
dimensions, which outperforms the best published numbers in \citet{fb}
by almost $0.12$ points with $200$ dimensions.

We analyze this construction to extract fundamental tradeoffs. One tradeoff involves  the
dimension, the properties of the graph, and the number of bits of precision - an important hidden cost. For example, on the WordNet graph, we require almost 500 bits of precision to store values from the combinatorial embedding. We can reduce this number to 32 bits, but at the cost of using 10 dimensions instead of two. We show that for a fixed precision, the dimension
required scales linearly with the length of the longest path. On the
other hand, the dimension scales logarithmically with the maximum
degree of the tree. This suggests that hyperbolic embeddings should
have high quality on hierarchies like WordNet but require large
dimensions or high precision on graphs with long chains---which is supported by our
experiments. A second observation is that in contrast to Euclidean
embeddings, hyperbolic embeddings are not scale invariant. This
motivates us to add a learnable scale term into a stochastic gradient descent-based Pytorch algorithm
described below, and we show that it allows us to empirically improve the
quality of embeddings.

To understand how hyperbolic embeddings perform for metrics that are
far from tree-like, we consider a more general problem: given a matrix
of distances that arise from points that are embeddable in hyperbolic
space of dimension $d$ (not necessarily from a graph), find a set of
points that produces these distances. In Euclidean space, the problem
is known as multidimensional scaling (MDS) which is solvable using
PCA.%
\footnote{There is no perfect analogue of PCA in hyperbolic
space~\cite{annals:stats}.}
A key step is a transformation that
effectively centers the points--without knowledge of their exact
coordinates. It is not obvious how to center points in hyperbolic
space, which is curved.
We show that in hyperbolic space, a centering operation is still possible with respect to a non-standard mean.
In turn, this allows us to reduce the hyperbolic
MDS problem (h-MDS) to a standard eigenvalue problem, and so
it can be solved with scalable power methods.
Further, we extend
classical perturbation analysis~\cite{Sibson1,Sibson2}. When applied to distances from real data,
h-MDS obtains low distortion on graphs that are far from tree
like. However, we observe that these solutions may require high
precision, which is not surprising in light of our previous analysis.

Finally, we consider handling increasing amounts of noise in the
model, which leads naturally into new SGD-based formulations. In
traditional PCA, one may discard eigenvectors that have
correspondingly small eigenvalues to cope with noise. In hyperbolic
space, this approach may produce suboptimal results. Like PCA, the
underlying problem is nonconvex. In contrast to PCA, the optimization
problem is more challenging: the underlying problem has local minima
that are not global minima. Our main technical result is that an
SGD-based algorithm initialized with a h-MDS solution can recover the
submanifold the data is on--even in some cases in which the data is
perturbed by noise that can be full dimensional. Our algorithm
essentially provides new recovery results for convergence for
Principal Geodesic Analysis (PGA) in hyperbolic space~\cite{PGA, GPCA}.
We discuss the nuances between our optimization algorithms and previous attempts 
at these problems in Appendix~\ref{sec:related}.

All of our results can handle incomplete distance information through standard
techniques. Using the observations above, we
implemented an SGD algorithm that minimizes
the loss derived from the PGA loss using PyTorch.\footnote{A minor
  instability with \citet{fb,ucl}'s formulation is that one must guard
  against \textsc{NaN}s. This instability may be unavoidable in
  formulations that minimize hyperbolic distance with gradient
  descent, as the derivative of the hyperbolic distance has a
  singularity, that is, $\lim_{y \to x} \partial_x |d_H(x, y)| \to
  \infty$ for any $x \in \mathbb{H}$ in which $d_H$ is the hyperbolic
  distance function. This issue can be mitigated by minimizing
  $d_H^2$, which does have a continuous derivative throughout
  $\mathbb{H}$. We propose to do so in Section~\ref{sec:PGA} and
discuss this further in the Appendix.}


\section{Background}
\label{sec:background}
We provide intuition connecting hyperbolic space and tree distances, discuss the metrics used to measure embedding fidelity, and provide the relationship between reconstruction and learning for graph embeddings.

\begin{figure}
\centering
\begin{minipage}[b]{0.35\textwidth}
\begin{tikzpicture}[scale=2.0]
\tkzDefPoint(0,0){O}
\tkzDefPoint(1,0){A}
\tkzDefPoint(-1,0){B}
\tkzDefPoint(0.7071,0.7071){C}
\tkzDefPoint(-0.7071,-0.7071){Cn}
\tkzDefPoint(-.2,0.9798){R}
\tkzDefPoint(.2,-0.9798){Rn}
\tkzDrawCircle(O,A)

\tkzDefPoint(0.7,0.3){a}
\tkzDefPoint(0.8,0.1){b}
\tkzDefPoint(-.5,.5){w1}
\tkzDefPoint(-0.1,.9){v1}
 
\tkzDefPoint(-.6,.4){w2}
\tkzDefPoint(0,.95){v2}

\tkzDefPoint(-.7,.3){w3}
\tkzDefPoint(.1,.9){v3}
 
\tkzDefPoint(-.8,.2){w4}
\tkzDefPoint(.2,.85){v4}

\tkzDefPoint(0.2,0){t1}
\tkzDefPoint(0.4,0){t2}
\tkzDefPoint(0.6,0){t3}
\tkzDefPoint(0.8,0){x}
 
\tkzDefPoint(0.1414, 0.1414){u1}
\tkzDefPoint(0.2828, 0.2828){u2}
\tkzDefPoint(0.4243, 0.4243){u3}
\tkzDefPoint(0.5657, 0.5657){y}

\tkzClipCircle(O,A)
\tkzDrawCircle[orthogonal through=u1 and t1](O,A)
\tkzDrawCircle[orthogonal through=u2 and t2](O,A)
\tkzDrawCircle[orthogonal through=u3 and t3](O,A)
\tkzDrawCircle[orthogonal through=x and y](O,A) 
\tkzDrawPoints[color=black,fill=black,size=12](t1,t2,t3,y,u1,u2,u3,x)

\tkzDrawPoints[color=black,fill=black,size=12](O)
\tkzDrawLine[through=O](R,Rn) 
\tkzLabelPoints[right](y)
\tkzLabelPoints[above](x)
\tkzLabelPoints[above left](O)
    
\end{tikzpicture}
  \end{minipage}
  \begin{minipage}[b]{0.50\textwidth}
\includegraphics[width=0.9\textwidth]{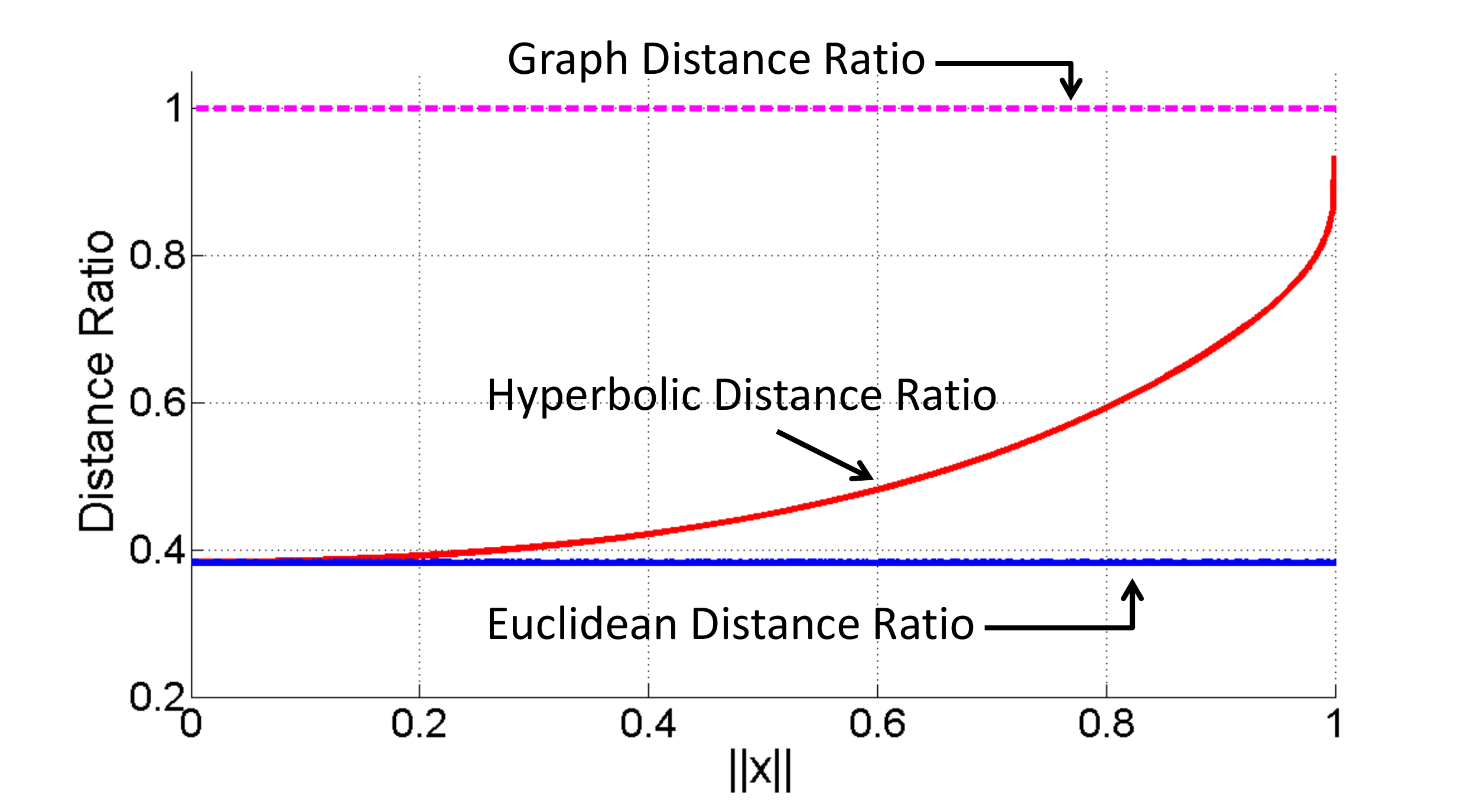}
  \end{minipage}
\caption{Geodesics and distances in the Poincar\'{e} disk. As $x$ and $y$ move towards the outside of the disk (i.e., letting $\|x\|, \|y\| \rightarrow 1$), the distance $d_H(x,y)$ approaches $d_H(x,O)+d_H(O,y)$.} 
\label{fig:geod}
\end{figure}

\paragraph*{Hyperbolic spaces} The Poincar\'{e} disk $\mathbb{H}_2$ is a
two-dimensional model of hyperbolic geometry with points located in
the interior of the unit disk, as shown in Figure~\ref{fig:geod}. A
natural generalization of $\mathbb{H}_2$ is the Poincar\'{e} ball
$\mathbb{H}_r$, with elements inside the unit ball. The Poincar\'{e}
models offer several useful properties, chief among which is mapping
conformally to Euclidean space. That is, angles are preserved between
hyperbolic and Euclidean space. Distances, on the other hand, are not
preserved, but are given by
\[d_{H}(x,y) = \text{acosh} \left( 1 + 2 \frac{\lVert { x}-{ y} \rVert^2}{(1-\lVert { x}\rVert^2)(1-\lVert { y}\rVert^2)} \right).\]

There are some potentially unexpected consequences of this formula,
and a simple example gives intuition about a key technical property
that allows hyperbolic space to embed trees. Consider three
points: the origin $0$, and points $x$ and $y$ with $\|x\|=\|y\| = t$ for some
$t > 0$. As shown on the right of Figure~\ref{fig:geod}, as
$t\rightarrow1$ (i.e., the points move towards the outside of the
disk), in flat Euclidean space, the ratio
$\frac{d_E(x,y)}{d_E(x,0)+d_E(0,y)}$ is {\em constant} with respect to
$t$ (blue curve). In contrast, the ratio $\frac{d_H(x,y)}{d_H(x,0)+d_H(0,y)}$ approaches 1, or, equivalently, the distance $d_H(x,y)$ approaches
$d_H(x,0)+d_H(0,y)$ (red and pink curves). That is, the shortest path between $x$ and $y$ is
almost the same as the path through the origin. This is analogous to the property of trees
in which the shortest path between two sibling nodes is the path through their parent.
This tree-like nature of hyperbolic space is the key property exploited by embeddings. Moreover,
this property holds for arbitrarily small angles between $x$ and
$y$.

\paragraph*{Lines and geodesics}  There are two types of geodesics (shortest
paths) in the Poincar{\'e} disk model of hyperbolic space: segments of circles that are orthogonal to
the disk surface, and disk diameters \cite{GeometryText}. Our algorithms and
proofs make use of a simple geometric fact: {\em isometric} reflection across geodesics
(preserving hyperbolic distances) is represented in this Euclidean model as a \emph{circle inversion}.
A particularly important reflection associated with each point $x$ is the one that takes $x$ to the
origin \citep[p.~268]{GeometryText}.

\paragraph*{Embeddings and fidelity measures} An \emph{embedding} is a mapping $f: U \rightarrow V$ for spaces $U,V$ with distances $d_U, d_V$. We measure the quality of embeddings with several \emph{fidelity measures}, presented here from most local to most global.

Recent work \cite{fb} proposes using the \emph{mean average precision} (MAP). For a graph $G=(V,E)$, let $a \in V$ have neighborhood $\mathcal{N}_a = \{b_1, b_2, \ldots, b_{\operatorname{deg}(a)}\}$, where $\operatorname{deg}(a)$ denotes the degree of $a$. In the embedding $f$, consider the points closest to $f(a)$, and define $R_{a,b_i}$ to be the smallest set of such points that contains $b_i$ (that is, $R_{a,b_i}$ is the smallest set of nearest points required to retrieve the $i$th neighbor of $a$ in $f$). Then, the MAP is defined to be
\[
\text{MAP}(f) 
=
\frac{1}{|V|}\sum_{a \in V} \frac{1}{|\mathcal{N}_a|}\sum_{i=1}^{|\mathcal{N}_a|} \text{Precision}(R_{a,b_i})
=
\frac{1}{|V|}\sum_{a \in V} \frac{1}{\operatorname{deg}(a)}\sum_{i=1}^{|\mathcal{N}_a|} \frac{ \left| \mathcal{N}_a \cap R_{a,b_i} \right| }{\left| R_{a,b_i} \right|}.
\]
We have $\text{MAP}(f) \leq 1$, with equality as the best case. 
Note that MAP is not concerned with the underlying distances at all, but only the ranks between the distances of immediate neighbors. It is a \emph{local} metric.


The standard metric for graph embeddings is distortion $D$. For an $n$ point embedding,
\[D(f) = \frac{1}{\binom{n}{2}} \left(\sum_{u,v \in U:u\neq v} \frac{| d_V(f(u),f(v)) - d_U(u,v)|}{d_U(u,v)}\right).\]
The best distortion is $D(f) = 0$, preserving the edge lengths exactly.
This is a \emph{global} metric, as it depends directly on the underlying distances rather than the local relationships between distances.
A variant of this, the worst-case distortion $D_{\mathrm{wc}}$, is the metric defined by
\[D_{\mathrm{wc}}(f) = \frac{\max_{u,v \in U: u \neq v}d_V(f(u),f(v))/d_U(u,v)}{\min_{u,v \in U : u\neq v} d_V(f(u),f(v))/d_U(u,v)} .\]
That is, the wost-case distortion is the ratio of the maximal expansion and the minimal contraction of distances. Note that scaling the unit distance does not affect $D_{\mathrm{wc}}$. The best worst-case distortion is $D_{\mathrm{wc}}(f) = 1$. 

The intended application of the embedding informs the choice of metric. For applications where the underlying distances are important, distortion is useful. On the other hand, if only rankings matter, MAP may suffice. This choice is important: as we shall see, different embedding algorithms implicitly target different metrics.

\paragraph*{Reconstruction and learning} In the case where we lack a full
set of distances, we can deal with the missing data in one of two
ways. First, we can use the triangle inequality to recover the missing
distances. Second, we can access the scaled Euclidean distances (the
inside of the $\acosh$ in $d_H(x,y)$), and then the resulting matrix can be
recovered with standard matrix completion techniques \cite{TaoMatrix}. Afterwards, we
can proceed to compute an embedding using any of the approaches
discussed in this paper. We quantify the error introduced by this
process experimentally in Section~\ref{sec:experiments}.



\section{Combinatorial Constructions}
\label{sec:combinatorial}



We first focus on hyperbolic tree embeddings---a natural approach
considering the tree-like behavior of hyperbolic space.  We
review the embedding of \citet{sarkar} to higher dimensions. We then
provide novel analysis about the precision of the embeddings that
reveals fundamental limits of hyperbolic embeddings. In particular, we
characterize the bits of precision needed for hyperbolic
representations. We then extend the construction to $r$ dimensions,
and we propose to use Steiner nodes to better embed general graphs as
trees building on a condition from \citet{Abraham}.

\paragraph*{Embedding trees} The nature of hyperbolic space lends itself towards excellent tree embeddings. In fact, it is possible to embed trees into the Poincar\'{e} disk $\mathbb{H}_2$ with arbitrarily low distortion \cite{sarkar}. Remarkably, trees cannot be embedded into Euclidean space with arbitrarily low distortion for \emph{any} number of dimensions. These notions motivate the following two-step process for embedding hierarchies into hyperbolic space.
\begin{enumerate}
  \setlength\itemsep{0em}
\item Embed the graph $G=(V,E)$ into a tree $T$,
\item Embed $T$ into the Poincar\'{e} ball $\mathbb{H}_d$.
\end{enumerate}

We refer to this process as the \emph{combinatorial construction}. Note that we are not required to minimize a loss function. We begin by describing the second stage, where we extend an elegant construction from \citet{sarkar}. 

\subsection{Sarkar's Construction}
Algorithm~\ref{alg:sarkar} implements a simple embedding of trees into $\mathbb{H}_2$. The algorithm takes as input a scaling factor $\tau$ a node $a$ (of degree $\operatorname{deg}(a)$) from the tree with parent node $b$. Suppose $a$ and $b$ have already been embedded into $\mathbb{H}_2$ and have corresponding embedded vectors $f(a)$ and $f(b)$. The algorithm places the children $c_1, c_2, \ldots, c_{\operatorname{deg}(a)-1}$ into $\mathbb{H}_2$ through a two-step process. 

First, $f(a)$ and $f(b)$ are reflected across a geodesic (using circle inversion) so
that $f(a)$ is mapped onto the origin $0$ and $f(b)$ is mapped onto some point $z$.
Next, we place the children nodes to vectors $y_1, \ldots, y_{d-1}$ equally spaced around a circle with radius $\frac{e^\tau-1}{e^\tau+1}$ (which is a circle of radius $\tau$ in the hyperbolic metric), and maximally separated from the reflected parent node embedding $z$. Lastly, we reflect all of the points back across the geodesic. 
Note that the isometric properties of reflections imply that all children are now at hyperbolic distance exactly $\tau$ from $f(a)$.

\begin{algorithm}[t]
\begin{algorithmic}[1]
\STATE \textbf{Input:} Node $a$ with parent $b$, children to place $c_1, c_2, \ldots, c_{\operatorname{deg}(a)-1}$, partial embedding $f$ containing an embedding for $a$ and $b$, scaling factor $\tau$
\STATE $(0, z) \leftarrow \operatorname{reflect}_{f(a) \rightarrow 0}(f(a),f(b))$ 
\STATE $\theta \leftarrow \operatorname{arg}(z)$ \hspace{2em} \COMMENT{angle of $z$ from x-axis in the plane}
\FOR{$i \in \{1, \ldots, \operatorname{deg}(a)-1 \}$}
\STATE $y_i \leftarrow \left(\frac{e^\tau-1}{e^\tau+1} \cdot \cos\left(\theta + \frac{2\pi i}{\operatorname{deg}(a)} \right) , \frac{e^\tau-1}{e^\tau+1} \cdot \sin\left(\theta+\frac{2\pi i}{\operatorname{deg}(a)}\right) \right)$ 
\ENDFOR
\STATE $(f(a), f(b), f(c_1),\ldots,f(c_{\operatorname{deg}(a)-1})) \leftarrow \operatorname{reflect}_{0 \rightarrow f(a)}(0, z, y_1, \ldots, y_{\operatorname{deg}(x)-1})$
\STATE \textbf{Output:} Embedded $\mathbb{H}_2$ vectors $f(c_1), f(c_2), \ldots, f(c_{\operatorname{deg}(a)-1})$
\end{algorithmic}
\caption{Sarkar's Construction}
\label{alg:sarkar}
\end{algorithm}

To embed the entire tree, we place the root at the origin $O$ and its children in a circle around it (as in Step~5 of Algorithm~\ref{alg:sarkar}), then recursively place their children until all nodes have been placed. Notice this construction runs in linear time.

\subsection{Analyzing Sarkar's Construction}
\label{sec:sarkar}
The \emph{Voronoi cell} around a node $a \in T$ consists of points $x \in \mathbb{H}_2$ such that $d_H(f(a),x) \leq d_H(f(b),x)$ for all $b \in T$ distinct from $a$. That is, the cell around $a$ includes all points closer to $f(a)$ than to any other embedded node of the tree. Sarkar's construction produces Delauney embeddings: embeddings where the Voronoi cells for points $a$ and $b$ touch only if $a$ and $b$ are neighbors in $T$. Thus this embedding will preserve neighborhoods.

A key technical idea exploited by \citet{sarkar} is to scale all the
edges by a factor $\tau$ before embedding. We can then recover the original distances
by dividing by $\tau$. This transformation exploits the fact that
hyperbolic space is not {\em scale invariant}.
Sarkar's construction always captures neighbors perfectly, but Figure~\ref{fig:geod} implies that increasing the scale preserves the distances between farther nodes better.
Indeed, if one sets
$\tau
= \frac{1+\varepsilon}{\varepsilon}\left(2\log \frac{\operatorname{deg}_{\max}}{\pi
/2}\right)$, then the worst-case distortion $D$ of the resulting embedding is no more than
$1+\varepsilon$. For trees, Sarkar's construction has arbitrarily high
fidelity. However, this comes at a cost: the scaling $\tau$ affects
the bits of precision required. In fact, we will show that the
precision scales logarithmically with the degree of the tree---but linearly with the maximum path length. We use
this to better understand the situations in which hyperbolic
embeddings obtain high quality.




How many bits of precision do we need to represent points in
$\mathbb{H}_2$? If $x \in \mathbb{H}_2$, then $\|x \| < 1$, so we need
sufficiently many bits so that $1 - \|x\|$ will not be rounded to zero. This requires
roughly $-\log (1-\|x\|) = \log \frac{1}{1-\|x\|}$ bits.  Say we are
embedding two points $x,y$ at distance $d$. As described in the
background, there is an isometric reflection that takes a pair of points $(x,y)$
in $\mathbb{H}_2$ to $(0,z)$ while preserving their distance, so
without loss of generality we have that
\[ d = d_H(x, y) = d_H(0,z) = \acosh \left(1+2\frac{\|z\|^2}{1-\|z\|^2} \right). \]
Rearranging the terms, we have \[\frac{\cosh(d)+1}{2} = \frac{1}{1-\|z\|^2} \ge \frac{1/2}{1-\|z\|}.\] Thus, the number of bits we want so that $1 - \|z\|$ will not be rounded to zero is $\log ( \cosh(d)+1)$. Since $\cosh(d) = (\exp(d)+\exp(-d))/2$, this is roughly $d$ bits.
That is, in hyperbolic space, we need about $d$ bits to express distances of $d$ (rather than $\log d$ as we would in Euclidean space).%
\footnote{Although it is particularly easy to bound precision in the Poincar{\'e} model, this fact holds generally for hyperbolic space independent of model. See Appendix~\ref{app:CombinatorialProofs} for a general lower bound argument.}
This result will be of use below.

Now we consider the largest distance in the embeddings produced by Algorithm~\ref{alg:sarkar}. If the longest path in the tree is $\ell$, and each edge has length $\tau = \frac{1}{\varepsilon}\left(2\log \frac{\operatorname{deg}_{\text{max}}}{\pi /2}\right)$, the largest distance is $O(\frac{\ell}{\varepsilon}\log \operatorname{deg}_{\text{max}})$, and we require this number of bits for the representation.

We interpret this expression. Note that $\operatorname{deg}_{\max}$ is inside the $\log$ term, so that a bushy tree is not penalized much in precision. On the other hand, the longest path length $\ell$ is not, so that hyperbolic embeddings struggle with long paths. 
Moreover, by selecting an explicit graph, we derive a matching lower
bound, concluding that to achieve a distortion $\varepsilon$, any
construction requires $\Omega\left(\frac{\ell}{\varepsilon} \log (\text{deg}_{\max}) \right)$
bits, which matches the upper bound of the combinatorial
construction. The argument follows from selecting a graph consisting
of $m(\text{deg}_{\max}+1)$ nodes in a tree with a single root and $\text{deg}_{\max}$ chains each of length $m$. The
proof of this result is described in Appendix~\ref{app:CombinatorialProofs}.



\subsection{Improving the Construction}


Our next contribution is a generalization of the construction from the disk $\mathbb{H}_2$ to the ball $\mathbb{H}_r$. Our construction follows the same line as Algorithm~\ref{alg:sarkar}, but since we have $r$ dimensions, the step where we place children spaced out on a circle around their parent now uses a hypersphere.

Spacing out points on the hypersphere is a classic problem known as \emph{spherical coding} \cite{Spheres}. As we shall see, the number of children that we can place for a particular angle grows with the dimension. Since the required scaling factor $\tau$ gets larger as the angle decreases, we can reduce $\tau$ for a particular embedding by increasing the dimension. Note that increasing the dimension helps with bushy trees (large $\operatorname{deg}_{\max}$), but has limited effect on tall trees with small $\operatorname{deg}_{\max}$. We show


\begin{proposition} The generalized $\mathbb{H}_r$ combinatorial construction has distortion at most $1+\varepsilon$ and requires at most $O(\frac{1}{\varepsilon}\frac{\ell}{r} \log \operatorname{deg}_{\max})$ bits to represent a node component for $r \leq (\log \operatorname{deg}_{\max})+1$, and $O(\frac{1 }{\varepsilon}\ell)$ bits for $r > (\log \operatorname{deg}_{\max})+1$. 
\end{proposition}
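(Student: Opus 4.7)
The plan is to extend Sarkar's construction to $\mathbb{H}_r$ by replacing the circle of radius $\tau$ (in the hyperbolic metric) around a parent node with a hyperbolic sphere $S^{r-1}$ of the same radius, and placing children at well-separated directions on that sphere. The analysis then decouples into three pieces: (i) a spherical coding bound that says how small the minimum angular separation $\phi$ between children must be as a function of $r$ and $\operatorname{deg}_{\max}$; (ii) a distortion bound that relates $\tau$ and $\phi$ to $\varepsilon$; and (iii) the per-coordinate precision bound derived from the longest distance in the embedding, via the $\cosh$ argument already developed for the 2D case.

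First I would invoke a standard spherical code estimate: one can pack $N$ points on $S^{r-1}$ with minimum pairwise angle $\phi$ whenever $N \le (c/\phi)^{r-1}$ for a universal constant $c$. Inverting, to place $\operatorname{deg}_{\max}$ children we can pick $\phi = \Omega\bigl(\operatorname{deg}_{\max}^{-1/(r-1)}\bigr)$. In the regime $r > (\log \operatorname{deg}_{\max}) + 1$, this already gives $\phi = \Omega(1)$ (one can even take the vertices of a hypercube / cross-polytope in $\mathbb{R}^r$ as a concrete construction), so the dependence on $\operatorname{deg}_{\max}$ disappears.

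Next I would prove the distortion bound by extending Sarkar's 2D argument. After the standard isometric reflection sending the parent to the origin, two siblings sit at hyperbolic distance $\tau$ from the origin, separated by Euclidean angle at least $\phi$. Plugging into the Poincaré formula and expanding, one gets that their mutual hyperbolic distance $d_H(y_i,y_j)$ lies in $[2\tau - O(e^{-\tau}/\phi^2),\, 2\tau]$; combined with Sarkar's Delaunay/neighbor preservation (which is already included with the construction), choosing $\tau \ge \frac{1+\varepsilon}{\varepsilon}\cdot 2\log(1/\phi)$ forces worst-case distortion at most $1+\varepsilon$. Substituting the two regimes of $\phi$ from the previous step gives $\tau = O\!\left(\frac{\log \operatorname{deg}_{\max}}{\varepsilon (r-1)}\right)$ when $r \le \log \operatorname{deg}_{\max} + 1$, and $\tau = O(1/\varepsilon)$ otherwise.

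Finally, the per-coordinate precision bound is immediate from the calculation already given in Section~\ref{sec:sarkar}: the largest hyperbolic distance achieved by the embedding is at most $\tau \cdot \ell$ (scaled edge length times longest path), and representing a Poincaré-ball point at hyperbolic distance $d$ from the origin requires $\Theta(d)$ bits. Multiplying $\tau$ by $\ell$ in each regime yields the stated bounds $O\!\left(\frac{\ell}{\varepsilon r}\log \operatorname{deg}_{\max}\right)$ and $O\!\left(\frac{\ell}{\varepsilon}\right)$. The main obstacle I anticipate is the distortion step: being careful with the spherical-code angle $\phi$ so that the $O(e^{-\tau}/\phi^2)$ error actually translates into a multiplicative $(1+\varepsilon)$ distortion uniformly over all pairs of embedded nodes (not just siblings), which requires inducting along root-to-leaf paths and using the reflection invariance at each step, exactly as in Sarkar's original proof but with the bushy-sibling estimate replaced by its $r$-dimensional counterpart.
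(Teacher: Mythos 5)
Your proposal follows essentially the same route as the paper: use a spherical-code packing bound to get a minimum pairwise angle $\phi = \Omega\bigl(\operatorname{deg}_{\max}^{-1/(r-1)}\bigr)$ for $\operatorname{deg}_{\max}$ children, translate that angle into a required per-edge scale $\tau \sim \log(1/\phi) \sim \tfrac{1}{r}\log\operatorname{deg}_{\max}$, apply the extra $\tfrac{1+\varepsilon}{\varepsilon}$ scaling for distortion, and multiply by the longest path $\ell$ to get the bit bound. The paper black-boxes Sarkar's cone argument via the identity $\tau = -\log(\tan(\alpha/2))$ and invokes the explicit Jenssen spherical-code lower bound at $\theta = \asin(\operatorname{deg}_{\max}^{-1/(r-1)})$, whereas you re-derive the sibling-distance estimate from the Poincar\'e formula directly, but the two arguments are the same at the level of structure and asymptotics.

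One small error in your intermediate distortion step: you claim $d_H(y_i,y_j) \in [2\tau - O(e^{-\tau}/\phi^2),\,2\tau]$, i.e.\ that the additive gap vanishes as $\tau\to\infty$. It does not. For two points at hyperbolic distance $\tau$ from the origin separated by Euclidean angle $\phi$, the Poincar\'e formula gives
\[
d_H(y_i,y_j) = \acosh\!\left(1 + \frac{2R^2(1-\cos\phi)}{(1-R^2)^2}\right), \qquad R = \tanh(\tau/2),
\]
and since $1-R^2 \approx 4e^{-\tau}$, for large $\tau$ this is $d_H(y_i,y_j) \approx 2\tau - 2\log(1/\phi) - \log 4$. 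The shortfall is $\Theta(\log(1/\phi))$, \emph{constant} in $\tau$, not exponentially small. What makes the construction work is that the \emph{relative} shortfall $\tfrac{\log(1/\phi)}{\tau}$ tends to zero as $\tau$ grows, which is exactly why your subsequent choice $\tau \ge \tfrac{1+\varepsilon}{\varepsilon}\cdot 2\log(1/\phi)$ is correct. So the final $\tau$ and bit bounds you state are right, but the stated form of the intermediate estimate would not survive scrutiny; you should replace it with the additive-gap statement above (or cite Sarkar directly, as the paper does).
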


The algorithm for the generalized $\mathbb{H}_r$ combinatorial construction replaces Step~5 in Algorithm~\ref{alg:sarkar} with a node placement step based on ideas from coding theory. The children are placed at the vertices of a hypercube inscribed into the unit hypersphere (and afterwards scaled by $\tau$). Each component of a hypercube vertex has the form $\frac{\pm 1}{\sqrt{r}}$. We index these points using binary sequences ${a} \in \{0,1\}^r$ in the following way:

\[{x}_{ a} = \left( \frac{(-1)^{a_1}}{\sqrt{r}}, \frac{(-1)^{a_2}}{\sqrt{r}} , \ldots, \frac{(-1)^{a_r}}{\sqrt{r}} \right).\]

We can space out the children by controlling the distances between the children. This is done in turn by selecting a set of binary sequences with a prescribed minimum Hamming distance---a binary error-correcting code---and placing the children at the resulting hypercube vertices. We provide more details on this technique and our choice of code in the appendix.

\subsection{Embedding into Trees}
\begin{figure}
\centering
\includegraphics[width=0.3\textwidth]{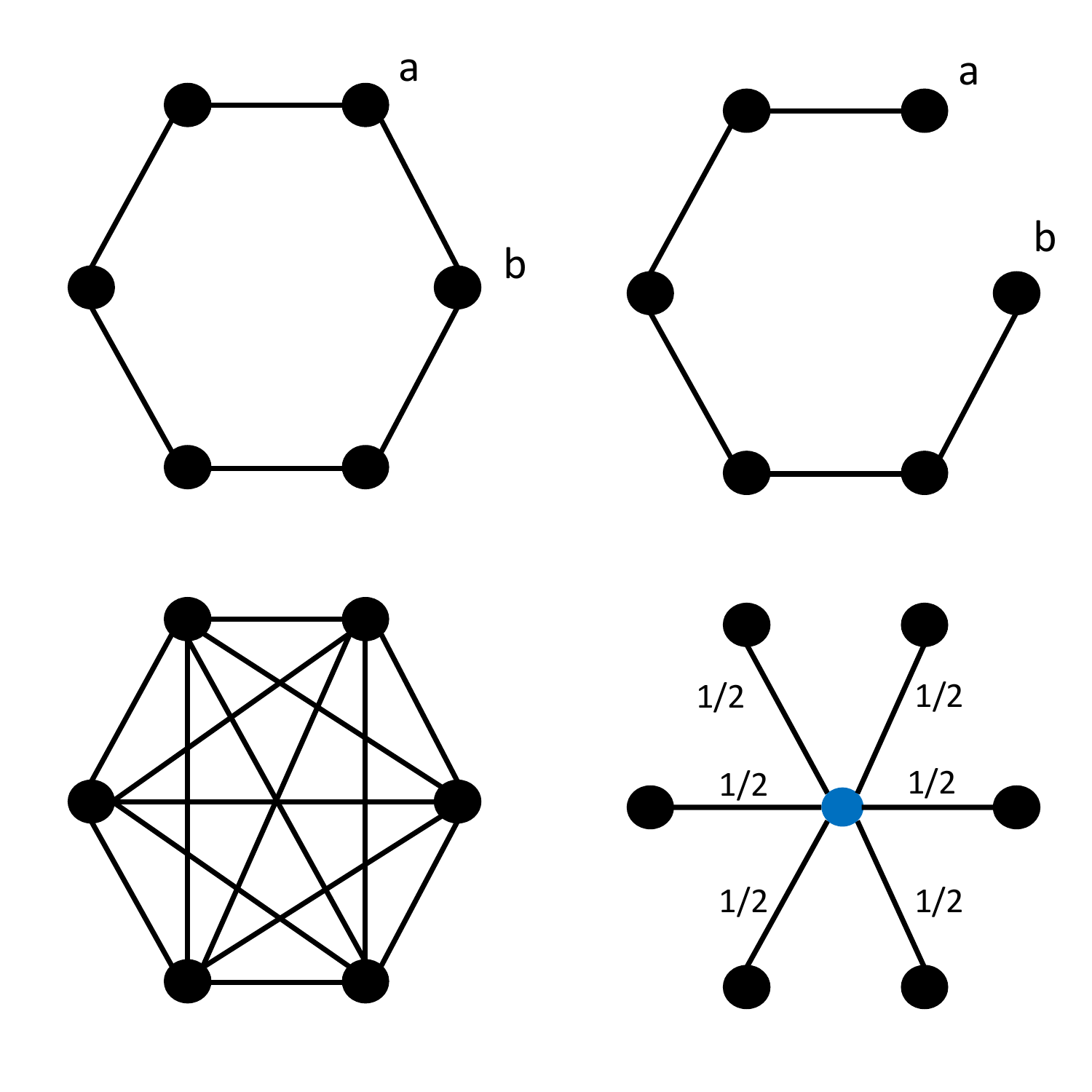}
\caption{Top. Cycles are a challenge for tree embeddings: $d_G(a,b)$ goes from $1$ to $5$. Bottom. Steiner nodes can help: adding a node (blue) and weighting edges maintains the pairwise distances.}
\label{fig:steiner}
\end{figure}
We revisit the first step of the construction: embedding graphs into
trees. There are fundamental limits to how well graphs can be embedded
into trees; in general, breaking long cycles inevitably adds
distortion, as shown in Figure~\ref{fig:steiner}. We are inspired by a
measure of this limit, the \emph{$\delta$-4 points condition} introduced
in \citet{Abraham}. A graph on $n$ nodes that satisfies the $\delta$-4
points condition has distortion at most $(1+\delta)^{c_1 \log n}$ for
some constant $c_1$. This result enables our end-to-end embedding to
achieve a distortion of at most \[D(f) \leq (1+\delta)^{c_1 \log n}(1
+ \varepsilon).\]

The result in \citet{Abraham} builds a tree with \emph{Steiner} nodes. These additional nodes can help control the distances in the resulting tree.
\begin{example} \label{ex:steiner}
Embed a complete graph on $\{1,2,\ldots, n\}$ into a tree. The tree will have a central node, say 1, w.l.o.g., connected to every other node; the shortest paths between pairs of nodes in $\{2,\ldots,n\}$ go from distance 1 in the graph to distance 2 in the tree. However, we can introduce a Steiner node $n+1$ and connect it to all of the nodes, with edge weights of $\frac{1}{2}$. This is shown in Figure~\ref{fig:steiner}. The distance between any pair of nodes in $\{1,\ldots,n\}$ remains 1.\end{example}

Note that introducing Steiner nodes can produce a weighted tree, but Algorithm~\ref{alg:sarkar} readily extends to the case of weighted trees by modifying Step~5.
We propose using the Steiner tree algorithm in \citet{Abraham} (used to achieve the distortion bound) for real embeddings, and we rely on it for our experiments in Section~\ref{sec:experiments}. In summary, 
the key takeaways of our analysis in this section are:
\begin{itemize}
\item
There is a fundamental tension between precision and quality in
hyperbolic embeddings.

\item Hyperbolic embeddings have an exponential advantage in space compared to Euclidean embeddings for short, bushy hierarchies, but will have less of an advantage
for graphs that contain long paths.

\item Choosing an appropriate scaling factor $\tau$ is critical for quality.
Later, we will propose to learn this scale factor automatically for computing embeddings in PyTorch.
\item Steiner nodes can help improve embeddings of graphs.
\end{itemize}

\section{Hyperbolic Multidimensional Scaling}
\label{sec:MDS}
In this section, we explore a fundamental and more general question than we did in the previous section: if we are given the pairwise distances arising from a set of points in hyperbolic space, can we recover the points? The equivalent problem for Euclidean distances is solved with multidimensional scaling (MDS). The goal of this section is to analyze the \emph{hyperbolic MDS} (h-MDS) problem. We describe and overcome the additional technical challenges imposed by hyperbolic distances, and show that exact recovery is possible and interpretable.
Afterwards we propose a technique for dimensionality reduction using principal geodesics analysis (PGA) that provides optimization guarantees.
In particular, this addresses the shortcomings of h-MDS when recovering points that do not exactly lie on a hyperbolic manifold.

\subsection{Exact Hyperbolic MDS}
\label{sec:exactmds}

Suppose that there is a set of
hyperbolic points $x_1,\dots, x_n \in \mathbb{H}_r$, embedded in the Poincar{\'e} ball and written $X \in
\mathbb{R}^{n \times r}$ in matrix form.
We observe all the pairwise distances $d_{i,j} = d_H(x_i, x_j)$, but do not observe $X$:
our goal is use the observed $d_{i,j}$'s to recover $X$ (or some other set of points with the same pairwise distances $d_{i,j}$).

The MDS algorithm in the Euclidean setting makes an important
\emph{centering}%
\footnote{We say that points are centered at a particular mean
  if this mean is at $0$. The act of centering refers to applying an isometry
  that makes the mean of the points $0$.}
assumption.
That is it assumes the points have mean $0$, and it turns out that if an exact
embedding for the distances exists, it can be recovered from a matrix factorization.
In other words, Euclidean MDS always recovers a centered embedding.

In hyperbolic space, the same algorithm does not work, but we show that it is possible to find an embedding centered at a different mean. 
More precisely, we introduce a new mean which we call the \emph{pseudo-Euclidean mean}, that behaves like the Euclidean mean in that it enables recovery through matrix factorization.
Once the points are recovered in hyperbolic space, they can be recentered around a more canonical mean by translating it to the origin.

Algorithm~\ref{alg:new_hmds} is our complete algorithm, and for the remainder of
this section we will describe how and why it works.
We first describe the \emph{hyperboloid model}, an alternate but equivalent model of hyperbolic geometry in which h-MDS is simpler. Of course, we can easily convert between the hyperboloid model and the Poincar\'{e} ball model we have used thus far.
Next, we show how to reduce the problem to a standard PCA problem, which recovers an embedding centered at the points' pseudo-Euclidean mean.
Finally, we discuss the meaning and implications of centering and prove that the algorithm preserves submanifolds as well---that is, if there is an exact embedding in $k < r$ dimensions centered at their canonical mean,
then our algorithm will recover them.

\paragraph*{The hyperboloid model}
Define $Q$ to be the diagonal matrix in $\R^{r+1}$ where $Q_{00} = 1$ and $Q_{ii} = -1$ for $i > 0$.
For a vector $x \in \R^{r+1}$, $x^TQx$ is called the \emph{Minkowski quadratic form}.
The hyperboloid model is defined as
\[
  \mathbb{M}_r = \left\{ x \in \R^{r+1} \middle| x^T Q x = 1 \land x_0 > 0 \right\}.
\]
This manifold is endowed with a distance measure
\[
  d_H(x, y) = \acosh(x^T Q y).
\]
As a notational convenience, for a point $x \in \mathbb{M}_r$ we will let $x_0$ denote $0$th coordinate $e_0^T x$, and let $\vec x \in \R^r$ denote the rest of the coordinates.
Notice that $x_0$ is just a function of $\vec x$ (in fact, $x_0 = \sqrt{1 + \| \vec{x} \|^2}$), and so we can equivalently consider just $\vec x$ as being a member of a model of hyperbolic space: this model is sometimes known as the Gans model.
With this notation, the Minkowski quadratic form can be simplified to $x^T Q y = x_0 y_0 - \vec{x}^T \vec{y}$.

\paragraph*{A new mean}
We introduce the new mean that we will use.
Given points $x_1, x_2, \ldots, x_n \in \mathbb{M}_r$ in hyperbolic space,
define a variance term
\[
  \Psi(z; x_1, x_2, \ldots, x_n)
  =
  \sum_{i=1}^n \sinh^2(d_H(x_i, z)).
\]
Using this, we define a \emph{pseudo-Euclidean mean} to be any local minimum of this expression.
Notice that this average is independent of the model of hyperbolic space that we are using, since it only is defined in terms of the hyperbolic distance function $d_H$.

\begin{lemma}
  \label{lmm:pe-centered}
  Define the matrix $X \in \R^{n \times r}$ such that $X^T e_i = \vec{x}_i$ and the vector $u \in \R^n$ such that $u_i = x_{0,i}$.
  Then
  \begin{align*}
    \left. \nabla_{\vec{z}} \Psi(z; x_1, x_2, \ldots, x_n) \right|_{\vec{z} = 0}
    =
    -2 \sum_{i=1}^n x_{0,i} \vec{x}_i
    =
    -2 X^T u.
  \end{align*}
\end{lemma}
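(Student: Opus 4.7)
The plan is to work directly in the hyperboloid model, where the distance has the clean form $d_H(x,z) = \acosh(x^T Q z)$. The hyperbolic identity $\sinh^2(\acosh t) = t^2 - 1$ (valid for $t \ge 1$, which holds for Minkowski inner products of hyperboloid points) lets me rewrite
\begin{equation*}
\Psi(z; x_1, \ldots, x_n) = \sum_{i=1}^n \bigl( (x_i^T Q z)^2 - 1 \bigr) = \sum_{i=1}^n (x_{0,i} z_0 - \vec{x}_i^T \vec{z})^2 - n,
\end{equation*}
reducing the problem from one about transcendental $\acosh$'s to a polynomial-style computation in the coordinates.

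Next I would differentiate with respect to $\vec{z}$, keeping in mind the constraint that $z_0 = \sqrt{1 + \|\vec{z}\|^2}$ (so $z$ stays on the hyperboloid). By the chain rule, $\nabla_{\vec{z}} z_0 = \vec{z}/z_0$, which vanishes at $\vec{z}=0$. Therefore, for each summand,
\begin{equation*}
\nabla_{\vec{z}}\,(x_{0,i} z_0 - \vec{x}_i^T \vec{z}) = x_{0,i}\,\frac{\vec{z}}{z_0} - \vec{x}_i,
\end{equation*}
which at $\vec{z}=0$ (where $z_0=1$ and $\vec{x}_i^T\vec{z}=0$) equals $-\vec{x}_i$. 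The outer derivative gives a factor of $2(x_{0,i}z_0 - \vec{x}_i^T \vec{z})$, which at $\vec{z}=0$ is just $2 x_{0,i}$. Multiplying and summing,
\begin{equation*}
\left.\nabla_{\vec{z}} \Psi\right|_{\vec{z}=0} = \sum_{i=1}^n 2 x_{0,i} \cdot (-\vec{x}_i) = -2 \sum_{i=1}^n x_{0,i} \vec{x}_i,
\end{equation*}
and recognizing that $\sum_i u_i \vec{x}_i = X^T u$ by definition of $X$ and $u$ finishes the identity.

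I do not anticipate a serious obstacle: essentially the whole argument is the observation that $\sinh^2 \circ \acosh$ collapses the distance squared to a quadratic in the Minkowski form, after which the Gans-coordinate expression of $z_0$ as a function of $\vec{z}$ has vanishing first-order contribution at the origin. The one subtlety worth spelling out is that we are differentiating along the constraint manifold $\mathbb{M}_r$, so we should either parameterize by $\vec{z}$ (as the lemma's notation suggests) or use a tangent-vector argument; either way the $\vec{z}/z_0$ factor kills the $z_0$-dependence at the base point. If desired, one can double-check by computing $\nabla_z \sinh^2(d_H(x_i,z))$ using $\frac{d}{dt}\sinh^2(\acosh t) = 2t$ and the chain rule through $x_i^T Q z$, arriving at the same expression.
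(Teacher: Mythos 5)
Your proof is correct and follows essentially the same route as the paper's: both collapse $\sinh^2(d_H(x_i,z))$ to $(x_i^T Q z)^2 - 1$, parameterize the hyperboloid by $\vec{z}$ with $z_0 = \sqrt{1+\|\vec{z}\|^2}$, and differentiate, noting that the $z_0$-term contributes $x_{0,i}\vec{z}/z_0$ which vanishes at the origin. No gaps.
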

This means that $0$ is a pseudo-Euclidean mean if and only if $0 = X^T u$.
Call some hyperbolic points $x_1, \ldots, x_n$ \emph{pseudo-Euclidean centered} if their average is $0$ in this sense: i.e. if $X^T u = 0$.
We can always center a set of points without affecting their pairwise distances by simply finding their average, and then sending it to $0$ through an isometry.

\paragraph*{Recovery via matrix factorization}
Suppose that there exist points $x_1, x_2, \ldots, x_n \in \mathbb{M}_r$
for which we observe their pairwise distances $d_H(x_i, x_j)$.
From these, we can compute the matrix $Y$ such that
\begin{equation}
  \label{eq:hmds-Y}
  Y_{i,j} = \cosh\left( d_H(x_i, x_j) \right) = x_i^T Q x_j = x_{0,i} x_{0,j} - \vec{x_i}^T \vec{x_j}.
\end{equation}
Furthermore, defining $X$ and $u$ as in Lemma~\ref{lmm:pe-centered},
then we can write $Y$ in matrix form as
\begin{equation}
  \label{eq:hmds-Y2}
  Y = u u^T - X X^T.
\end{equation}
Without loss of generality, we can suppose that the points we are trying to recover, $x_1, \ldots, x_n$, are centered at their pseudo-Euclidean mean, so that $X^T u = 0$ by Lemma~\ref{lmm:pe-centered}.

This implies that $u$ is an eigenvector of $Y$ with positive eigenvalue, and the rest of $Y$'s eigenvalues are negative.
Therefore an eigendecomposition of $Y$ will find $u,\hat{X}$ such that $Y = u u^T - \hat{X} \hat{X}^T$,
i.e. it will directly recover $X$ up to rotation.

In fact, running PCA on $-Y = X^T X - u u^T$ to find the $n$ most significant non-negative eigenvectors will recover $X$ up to rotation,
and then $u$ can be found by leveraging the fact that $x_0 = \sqrt{1 + \| \vec{x} \|^2}$.

This leads to Algorithm~\ref{alg:new_hmds}, with optional post-processing steps for converting the embedding to the Poincar{\'e} ball model and for re-centering the points.

\paragraph*{A word on centering}
The MDS algorithm in Euclidean geometry returns points centered at their \emph{Karcher mean} $z$, which is a point minimizing $\sum d^2(z, x_i)$ (where $d$ is the distance metric).
The Karcher center is particularly useful for interpreting dimensionality reduction; for example, we use the analogous hyperbolic Karcher mean to perform PGA in Section~\ref{sec:PGA}.

Although Algorithm~\ref{alg:new_hmds} returns points centered at their pseudo-Euclidean mean instead of their Karcher mean, they can be easily recentered
by finding their Karcher mean and reflecting it onto the origin. 
Furthermore, we show that Algorithm~\ref{alg:new_hmds} \emph{preserves the dimension of the embedding}.
More precisely, we prove Lemma~\ref{lmm:hmds-centering} in Appendix~\ref{sec:mds-proof}.
\begin{lemma}
  \label{lmm:hmds-centering}
  If a set of points lie in a dimension-$k$ geodesic submanifold, then both their Karcher mean and their pseudo-Euclidean mean lie in the same submanifold.
\end{lemma}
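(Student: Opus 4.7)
The plan is to work in the hyperboloid model $\mathbb{M}_r$, where every $k$-dimensional geodesic submanifold $S$ has the form $S = V \cap \mathbb{M}_r$ for some $(k+1)$-dimensional linear subspace $V\subseteq\R^{r+1}$. Since $x_1,\dots,x_n\in S$ implies $\operatorname{span}(x_1,\dots,x_n)\subseteq V$, it suffices in both cases to show that the mean lies in $V$.

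For the pseudo-Euclidean mean I would exploit that $\Psi$ is a quadratic form in the ambient coordinates. Using the identity $\cosh(d_H(z,x_i))=z^TQx_i$, we have $\sinh^2(d_H(z,x_i))=(z^TQx_i)^2-1$, so
\[
\Psi(z)\;=\;z^T Q A Q z - n, \qquad A=\sum_{i=1}^n x_ix_i^T.
\]
Any local minimum $z^\star$ of $\Psi$ on $\mathbb{M}_r$ must satisfy the Lagrange condition $QAQ z^\star=\mu\, Qz^\star$, equivalently $AQz^\star=\mu\, z^\star$. Left-multiplying by $(z^\star)^T Q$ and using $(z^\star)^T Q z^\star=1$ gives $\mu=(z^\star)^T QAQz^\star=\Psi(z^\star)+n\ge n>0$, so we may divide and obtain
\[
z^\star \;=\; \mu^{-1}\, AQz^\star \;=\; \mu^{-1}\sum_{i=1}^n (x_i^T Q z^\star)\, x_i \;\in\; \operatorname{span}(x_1,\dots,x_n)\subseteq V,
\]
which puts $z^\star$ in $S$.

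For the Karcher mean I would use a symmetry argument. Every totally geodesic submanifold $S$ of $\mathbb{H}_r$ is the fixed-point set of an ambient isometry $\sigma$ (the Minkowski-orthogonal reflection across $V$ in the hyperboloid picture; this reflection preserves the sheet $z_0>0$ because $V^{\perp_Q}$ is space-like). Because each $x_i\in S$ we have $\sigma(x_i)=x_i$, so $\sum_i d_H(\sigma(z),x_i)^2=\sum_i d_H(z,x_i)^2$ for every $z$, i.e.\ $\sigma$ sends Karcher minimizers to Karcher minimizers. Since $\mathbb{H}_r$ is a Hadamard manifold, the Karcher objective is strictly geodesically convex and its minimizer $z^\star$ is unique; hence $\sigma(z^\star)=z^\star$, placing $z^\star$ in the fixed-point set $S$.

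The main obstacle is justifying the background facts used in the setup: that a $k$-dimensional totally geodesic submanifold of $\mathbb{H}_r$ really is cut out by a $(k+1)$-dimensional linear subspace of the ambient space, and that the associated Minkowski-orthogonal reflection is a well-defined isometry of $\mathbb{H}_r$ whose fixed-point set is exactly $S$. Granting these classical facts, the pseudo-Euclidean part is a short Lagrange-multiplier computation and the Karcher part reduces to uniqueness of the Karcher mean on a Hadamard manifold.
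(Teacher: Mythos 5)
Your proof is correct, but it takes a genuinely different route from the paper's. The paper proves this lemma via a single unified projection argument: it first shows (using the hyperbolic Pythagorean theorem) that projecting a point onto a geodesic submanifold $S$ strictly decreases its distance to every point of $S$, and then observes that both the Karcher objective $\sum_i d_H(\cdot,x_i)^2$ and the pseudo-Euclidean objective $\Psi=\sum_i \sinh^2 d_H(\cdot,x_i)$ are sums of monotone functions of these distances, so projection strictly decreases both objectives; hence some minimizer of each must already lie in $S$. Your approach splits the two cases and uses more structure in each. For $\Psi$ you exploit the crucial (and otherwise implicit) fact that $\Psi$ is a \emph{quadratic form} in the ambient hyperboloid coordinates, giving a clean Lagrange-multiplier computation that in fact shows \emph{every} critical point of $\Psi$ on $\mathbb{M}_r$ lies in $\operatorname{span}(x_1,\dots,x_n)$ — a somewhat stronger conclusion than the paper's existence statement, and arguably more in the spirit of why the pseudo-Euclidean mean is a useful object in the first place. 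For the Karcher mean you invoke a symmetry-plus-uniqueness argument, which requires the classical (but not free) facts that the Karcher mean on a Hadamard manifold is unique and that the Minkowski-orthogonal reflection across $V$ is a well-defined isometry fixing $S$ and preserving the upper sheet; your parenthetical that $V^{\perp_Q}$ is space-like is the right reason the reflection preserves the sheet, though it deserves a sentence of justification (a point in $V\cap\mathbb{M}_r$ is time-like, so $V$ contains a time-like vector, forcing $V^{\perp_Q}$ space-like, and then a continuity argument along $t\mapsto v+tw$ shows $z_0$ cannot change sign). The tradeoff is that the paper's argument is more elementary and uniform — one short projection lemma handles both means and would handle any increasing functional of distances — whereas yours makes the linear-algebraic content of $\Psi$ explicit and delivers a slightly sharper statement for the pseudo-Euclidean mean, at the cost of pulling in uniqueness of the Frechet mean and the reflection machinery for the Karcher case.
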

This implies that centering with the pseudo-Euclidean mean preserves geodesic submanifolds:
If it is possible to embed distances in a dimension-$k$ geodesic submanifold centered and rooted at a Karcher mean, then it is also possible to embed the distances in a dimension-$k$ submanifold centered and rooted at a pseudo-Euclidean mean, and vice versa.

\begin{algorithm}[t]
\caption{ }
\begin{algorithmic}[1]
\STATE {\bfseries Input: Distance matrix $d_{i,j}$ and rank $r$}
\STATE Compute scaled distance matrix $Y_{i,j} = \cosh(d_{i,j})$
\STATE $X \rightarrow \text{PCA}(-Y,r)$
\STATE Project $X$ from hyperboloid model to Poincar\'{e} model: $x \to \frac{x}{1 + \sqrt{1 + \|x\|^2}}$
\STATE If desired, center $X$ at a different mean (e.g. the Karcher mean)
\STATE \textbf{return} $X$
\end{algorithmic}
\label{alg:new_hmds}
\end{algorithm}


\subsection{Reducing Dimensionality with PGA}
\label{sec:PGA}
Sometimes we are given a
high-rank embedding (resulting from h-MDS, for example), and wish to
find a lower-rank version.
In Euclidean space, one can get the optimal
lower rank embedding by simply discarding components. However, this
may not be the case in hyperbolic space.
Motivated by this, we study dimensionality reduction in hyperbolic space.

As hyperbolic space does not have a linear subspace structure like
Euclidean space, we need to define what we mean by lower-dimensional.
We follow Principal Geodesic Analysis~\cite{PGA}, \cite{GPCA}. Consider an initial
embedding with points $x_1,\dots,x_n \in \mathbb{H}_2$ and let $d_{H}
: \mathbb{H}_2 \times \mathbb{H}_2 \to \mathbb{R}_{+}$ be the hyperbolic distance.
Suppose we want to map this embedding onto a one-dimensional subspace. (Note that
we are considering a two-dimensional embedding and one-dimensional subspace
here for simplicity, and these
results immediately extend to higher dimensions.) In this case, the goal of PGA
is to find a geodesic $\gamma :
[0,1] \to \mathbb{H}_2$ that passes through the mean of the points and that minimizes the squared error (or variance):
\[ f(\gamma) = \sum_{i=1}^n \min_{t \in [0,1]} d_{H}(\gamma(t),x_i)^2 .\]
This expression can be simplified significantly and reduced to a
minimization in Euclidean space.  First, we find the mean of the
points, the point $\bar x$ which minimizes $\sum_{i=1}^n d_{H}(\bar
x, x_i)^2$; this definition in terms of distances generalizes the mean in Euclidean space.\footnote{As we noted earlier, considering the distances
without squares leads to a non-continuously-differentiable
formulation.}  Next, we reflect all the points $x_i$ so that their
mean is $0$ in the Poincar{\'e} disk model; we can do this using a
circle inversion that maps $\bar x$ onto $0$.
In the Poincar{\'e} disk model, a geodesic through
the origin is a Euclidean line, and the action of the reflection across
this line is the same in both Euclidean and hyperbolic space. Coupled
with the fact that reflections are isometric, if $\gamma$ is a line
through $0$ and $R_\gamma$ is the reflection across $\gamma$, we have
\[
  d_H(\gamma, x) = \min_{t \in [0,1]} d_H(\gamma(t), x) = \frac{1}{2} d_H(R_l x, x).
\]
Combining this with the Euclidean reflection formula and the hyperbolic metric produces
\[
  f(\gamma) = \frac{1}{4} \sum_{i=1}^n \acosh^2\left( 1 + \frac{ 8 d_{E}(\gamma,x_i)^2 }{(1 - \| x_i \|^2)^2} \right),
\]
in which $d_{E}$ is the Euclidean distance from a point to a line. If
we define $w_i = \sqrt{8} x_i / (1 - \| x_i \|^2)$ this reduces to the simplified expression
\[
  f(\gamma) = \frac{1}{4} \sum_{i=1}^n \acosh^2\left( 1 + d_{E}(\gamma,w_i)^2 \right).
\]
  
Notice that \emph{the loss function is not convex}. We observe that
there can be multiple local minima that are attractive and stable, in
contrast to PCA.  Figure~\ref{fig:pga} illustrates this nonconvexity
on a simple dataset in $\mathbb{H}_2$ with only four examples.  This
makes globally optimizing the objective difficult.
\begin{figure}
\centering
\resizebox{0.48\textwidth}{!}{\large\input{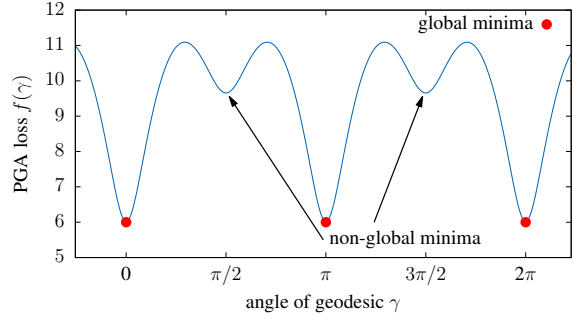}}
\caption{The PGA objective of an example task where the input dataset in the Poincar{\'e} disk is $x_1 = (0.8,0)$, $x_2 = (-0.8,0)$, $x_3 = (0,0.7)$ and $x_4 = (0,-0.7)$. Note the presence of non-optimal local minima, unlike PCA.}
\label{fig:pga}
\end{figure}

Nevertheless, there will always be a region $\Omega$ containing a
global optimum $\gamma^*$ that is convex and admits an efficient
projection, and where $f$ is convex when restricted to $\Omega$. Thus
it is possible to build a gradient descent-based algorithm to recover
lower-dimensional subspaces: for example, we built a simple optimizer
in PyTorch.  We also give a sufficient condition on the data for $f$ above
to be convex.
\begin{lemma}
For hyperbolic PGA if for all $i$,
\[
  \acosh^2\left( 1 + d_{E}(\gamma,w_i)^2 \right) < \min\left(1, \frac{1}{3} \| w_i \|^2 \right)
\]
then $f$ is locally convex at $\gamma$.
\label{lemma:pga}
\end{lemma}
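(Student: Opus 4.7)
\emph{Plan.} Since $f(\gamma)=\tfrac{1}{4}\sum_i g_i(\gamma)$ with $g_i(\gamma)=\acosh^2(1+d_E(\gamma,w_i)^2)$ and the hypothesis is stated per index $i$, my plan is to show that each $g_i$ has positive semidefinite Hessian at $\gamma$; the claim for $f$ then follows by summing. Parameterize a geodesic through the origin by its unit direction $\hat u$, and for an arbitrary tangent perturbation $v\perp\hat u$ with $\|v\|=1$ consider the one-parameter family $\hat u(t)=\hat u\cos t+v\sin t$. Writing $a_i=w_i^\top \hat u$ and $b_i=w_i^\top v$, the squared Euclidean line-distance is $s_i(t)=\|w_i\|^2-(a_i\cos t+b_i\sin t)^2$, giving $s_i(0)=\|w_i\|^2-a_i^2$, $s_i'(0)=-2a_ib_i$, and $s_i''(0)=2(a_i^2-b_i^2)$, subject to the constraint $b_i^2\le s_i(0)$ coming from $a_i^2+b_i^2\le\|w_i\|^2$.

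Set $h(s)=\acosh^2(1+s)$ and $\beta_i=\acosh(1+s_i(0))$, so that $g_i(\gamma)=\beta_i^2$ and the hypothesis reads $\beta_i^2<\min(1,\|w_i\|^2/3)$. Direct differentiation yields
\[
h'(s)=\frac{2\beta_i}{\sinh\beta_i},\qquad h''(s)=\frac{2(\sinh\beta_i-\beta_i\cosh\beta_i)}{\sinh^3\beta_i}\le 0,
\]
and the chain rule gives $g_i''(0)=4h''(s_i)a_i^2b_i^2+2h'(s_i)(a_i^2-b_i^2)$. Since $h'' \le 0$ and $h' \ge 0$, the coefficient of $b_i^2$ in this expression is non-positive, so $g_i''(0)$ is minimized over admissible $v$ in the extremal case $b_i^2=s_i(0)=\cosh\beta_i-1$ (i.e.\ $w_i$ lies in the plane of $\hat u$ and $v$), $a_i^2=\|w_i\|^2-s_i(0)$. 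Substituting these values, clearing denominators, and applying $\sinh^2\beta_i=(\cosh\beta_i-1)(\cosh\beta_i+1)$ reduces $g_i''(0)\ge 0$ to the single scalar inequality
\[
\|w_i\|^2\ \ge\ F(\beta_i)\ :=\ \frac{2(\cosh\beta_i-1)(\beta_i+\sinh\beta_i)}{2\sinh\beta_i-\beta_i(\cosh\beta_i-1)},
\]
whose denominator one checks to be positive on $\beta_i\in(0,1]$.

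It therefore remains to verify that $F(\beta)\le 3\beta^2$ on $(0,1]$; this is exactly where the constants $1$ and $\tfrac13$ of the hypothesis are calibrated to appear. Taylor expansion gives $F(\beta)/\beta^2=(2+\tfrac13\beta^2+O(\beta^4))/(2-\tfrac16\beta^2+O(\beta^4))$, which equals $1$ at $\beta=0$ and grows monotonically to roughly $14/11$ at $\beta=1$, hence comfortably below $3$. Combined with the hypothesis $\|w_i\|^2>3\beta_i^2$ this gives $\|w_i\|^2>F(\beta_i)$, so $g_i''(0)\ge 0$ in every tangent direction $v$; the Hessian of each $g_i$ at $\gamma$ is thus PSD, and summing over $i$ yields the conclusion for $f$. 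I expect the main obstacle to be producing a uniformly rigorous (non-numerical) proof of $F(\beta)\le 3\beta^2$ across the full interval $(0,1]$, rather than only near the endpoints; the rest is routine chain rule together with elementary hyperbolic identities.
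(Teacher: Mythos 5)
Your proof follows essentially the same route as the paper's: parameterize lines through the origin by a unit vector, compute the second derivative of $h(R(\cdot))$ along a geodesic on the sphere, identify the extremal tangent direction (where $b_i^2$ is maximal, i.e.\ $b_i^2 = s_i(0)$, so $w_i$ lies in the plane of $\hat u$ and $v$), and reduce to a one-variable inequality. Two remarks. Your reduction $\|w_i\|^2 \ge F(\beta_i)$ is exact and is expressed directly in the quantity $\beta_i^2 = \acosh^2(1+d_E(\gamma,w_i)^2)$ that appears verbatim in the hypothesis; the paper instead first invokes the auxiliary fact $4\gamma h''(\gamma)+h'(\gamma)\ge 0$ for $0\le\gamma\le 1$ (asserted but not proved) to get the looser inequality $\|v_i\|^2\ge 3R(z)$ in terms of the squared Euclidean residual $R(z)$, and then matches the hypothesis by writing $\acosh^2\bigl(1+d_E^2\bigr)=R(z)$, an identity which is literally false but harmless here because $\acosh^2(1+t)\ge t$ whenever $t\le 1$. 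As for the step you flag as unrigorous, namely $F(\beta)\le 3\beta^2$ on $(0,1]$, it does hold and can be closed with elementary bounds: term-by-term comparison of power series gives $\cosh\beta-1\le\tfrac{\beta^2}{2}\cosh\beta\le\tfrac{\cosh 1}{2}\beta^2$; also $\sinh\beta+\beta\le\beta(1+\cosh\beta)\le(1+\cosh 1)\beta$ and $2\sinh\beta-\beta(\cosh\beta-1)\ge(3-\cosh 1)\beta$ for $\beta\in(0,1]$, whence $F(\beta)\le\frac{\cosh 1\,(1+\cosh 1)}{3-\cosh 1}\,\beta^2 < 2.7\,\beta^2 < 3\beta^2$. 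This is the same order of unstated verification as in the paper's own proof, so your argument is sound and on par with it.
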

As a result, if we initialize in and optimize over a region that
contains $\gamma^*$ and where the condition of Lemma~\ref{lemma:pga}
holds, then gradient descent will be guaranteed to converge to
$\gamma^*$. We can turn this result around and read it as a recovery
result: if the noise is bounded in this regime, then we are able to
provably recover the correct low-dimensional embedding.

\section{Experiments}
\label{sec:experiments}

\begin{table}\centering
\begin{tabular}{|c||c|c|c|}\hline
  Dataset & Nodes & Edges & Comment\\
  \hline Bal. Tree & 40 & 39 & Tree\\
  Phy. Tree & 344 & 343 & Tree\\
  \hline\hline CS PhDs & 1025 & 1043 & Tree-like\\
  WordNet & 74374 & 75834 & Tree-like\\
  \hline\hline Diseases & 516 & 1188 & Dense\\
 Gr-QC & 4158 &  13428& Dense\\
  \hline
\end{tabular}
\caption{Datasets Statistics.}
\end{table}

\begin{table}[h]
\centering
\begin{tabular}{|l||c|c|c|} \hline
Dataset     	          &  C-$\mathbb{H}_2$ &  FB $\mathbb{H}_5$ & FB $\mathbb{H}_{200}$                  \\ \hline\hline
WordNet & 0.989 & 0.823* & 0.87*\\
  \hline
\end{tabular}
\caption{MAP measure for WordNet embedding compared to values in \citet{fb}. Closer to 1 is better.}
\label{table:wordnet_results}
\end{table}

\begin{table}[h]
\centering
\begin{tabular}{|l||c|c||c|c|c|c|c|} \hline
  Dataset     	          &  C-$\mathbb{H}_2$ &  FB $\mathbb{H}_2$ & h-MDS & PyTorch & PWS & PCA & FB                  \\ \hline\hline
  Bal. Tree         & {\bf 0.013}         &      0.425               &    {\bf 0.077}     & 0.034 & 0.020   &  0.496    & 0.236 \\ 
  Phy. Tree          & {\bf 0.006}             &     0.832               &  {\bf   0.039}    & 0.237 & 0.092 &   0.746        &       0.583     \\
\hline \hline
CS PhDs              & {\bf 0.286}    &   0.542                           &  {\bf 0.149}  & 0.298 & 0.187   &  0.708  & 0.336   \\ 
\hline\hline 
Diseases              & {\bf 0.147}    &    0.410                          &  0.111  & {\bf 0.080} &  0.108   &    0.595 &  0.764              \\ \hline 
Gr-QC               & {\bf 0.354}    &   -                               & 0.530  & {\bf 0.125} & 0.134 &   0.546 &   -\\ \hline

\end{tabular}
\caption{Distortion measures using combinatorial and h-MDS techniques, compared against PCA and results from \citet{fb}. Closer to 0 is better.}
\label{table:distortion_results}
\end{table}

\begin{table}[h]
\centering
\begin{tabular}{|l||c|c||c|c|c|c|c|} \hline
  Dataset     	          &  C-$\mathbb{H}_2$ &  FB $\mathbb{H}_2$ & h-MDS & PyTorch & PWS & PCA & FB                  \\ \hline\hline
  Bal. Tree          & {\bf 1.0}              &    0.846    &    {\bf 1.0}    & {\bf 1.0} & {\bf 1.0}       &  {\bf 1.0}           & 0.859     \\ 
Phy. Tree          &      {\bf 1.0}                &    0.718  &   0.675    & 0.951 & 0.998          &    {\bf 1.0}          &       0.811     \\
\hline
\hline
CS PhDs              & {\bf 0.991}             &  0.567          &  0.463    &     0.799 & {\bf 0.945} &              0.541 & 0.78  \\ 
\hline\hline
Diseases              &      {\bf 0.822}             & 0.788          &     0.949    &  0.995  &  0.897      &         {\bf 0.999}      &    0.934\\
\hline
Gr-QC                          &    0.696      & -               &    0.710  & 0.733 & 0.504 &  0.738  &  {\bf 0.999}$^*$  \\ \hline 
\end{tabular}
\caption{MAP measures using combinatorial and h-MDS techniques, compared against PCA. 
Closer to 1 is better.}
\label{table:map_results}
\end{table}

We evaluate the proposed approaches and compare against
existing methods. We hypothesize that for tree-like data, the
combinatorial construction offers the best performance. For general
data, we expect h-MDS to produce the lowest distortion, while it may
have low MAP due to precision limitations. We anticipate that
dimension is a critical factor (outside of the combinatorial
construction). Finally, we expect that the MAP of h-MDS techniques can
be improved by learning the correct scale and weighting the loss
function as suggested in earlier sections. In the Appendix, we report
on additional datasets, parameters found by the combinatorial
construction, and the effect of hyperparameters.

\begin{figure}
\centering
\includegraphics[width=0.4\textwidth]{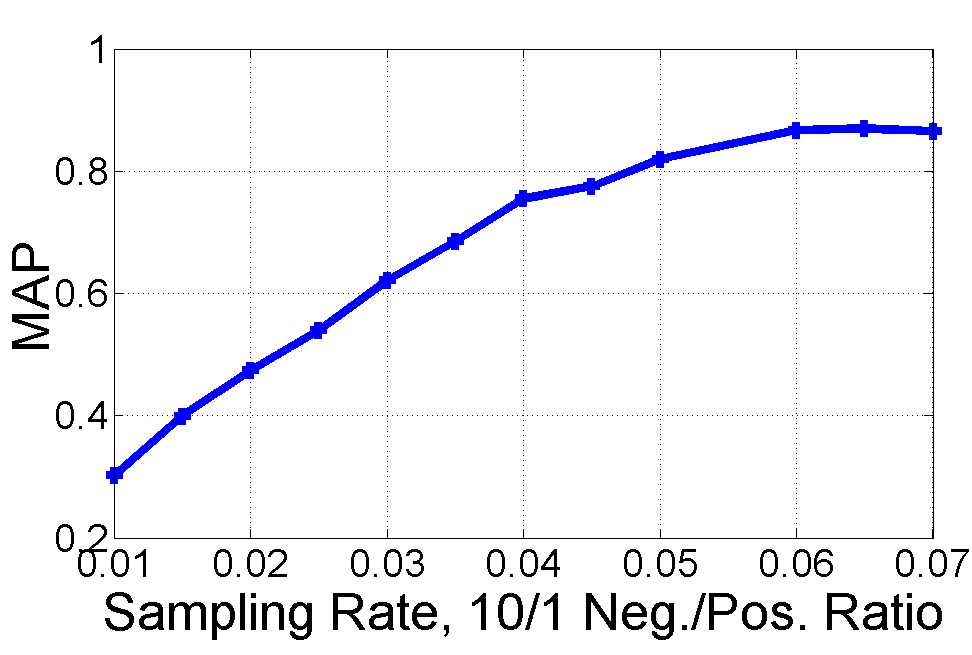}
\includegraphics[width=0.4\textwidth]{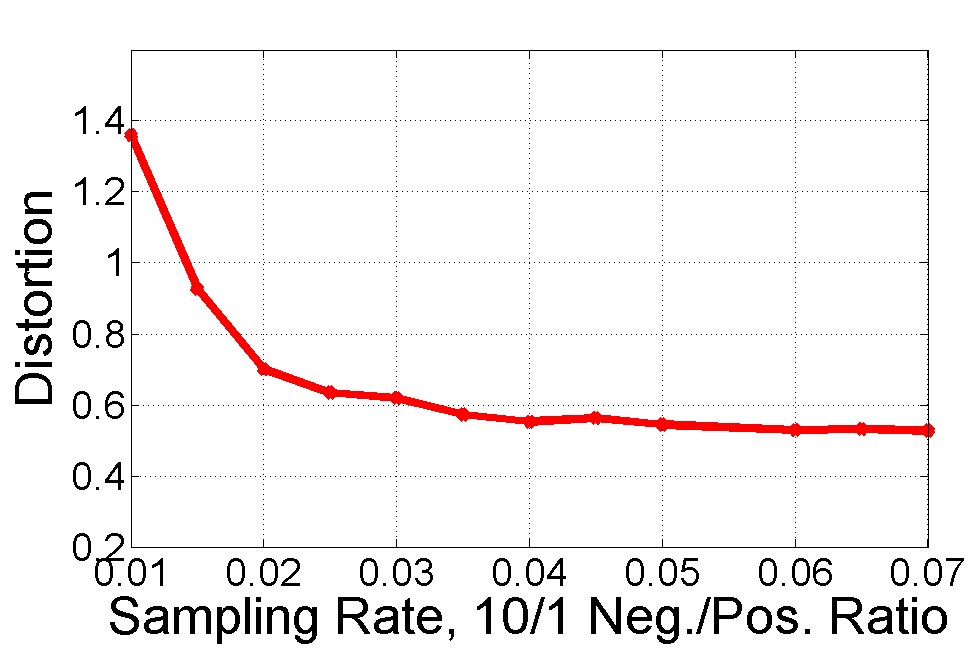}
\caption{Learning from incomplete information. The distance matrix is
  sampled, completed, and embedded.}
\label{fig:sampling}
\end{figure}


\begin{table}[]
\centering
\begin{tabular}{|r||c|c|c|}
\hline 
Rank  & No Scale  &  Learned Scale  &  Exp. Weighting \\    \hline    \hline
50 & 0.481 & 0.508 & {\bf 0.775} \\  \hline
100 & 0.688 & 0.681 & {\bf 0.882}\\ \hline
200 & 0.894 & 0.907 & {\bf 0.963}\\ \hline
\end{tabular}
\caption{Ph.D. dataset. Improved MAP performance of PyTorch implementation using a modified PGA-like loss function.}
\label{table:pytorch}
\end{table}

\paragraph*{Datasets}
We consider trees, tree-like hierarchies, and graphs that are not
tree-like. First, we consider hierarchies that form trees:
fully-balanced trees along with phylogenetic trees expressing genetic
heritage (of mosses growing in urban environments \cite{phylo-tree}, available from \cite{treebase}). Similarly, we used hierarchies that are nearly tree-like:
WordNet hypernym (the largest connected component from~\citet{fb}) and
a graph of Ph.D. advisor-advisee relationships \cite{de2011exploratory}. Also included are
datasets that vary in their tree nearness, such as biological sets
involving disease relationships \cite{goh2007human} and protein interactions in yeast
bacteria \cite{jeong2001lethality}, both available from \cite{nr}. We also include the collaboration network formed by authorship relations for papers submitted to the general relativity and quantum cosmology (Gr-QC) arXiv \cite{grqc}.

\paragraph*{Approaches} Combinatorial
embeddings into $\mathbb{H}_2$ are done using Steiner trees generated
from a randomly selected root for the $\varepsilon=0.1$ precision
setting; others are considered in the Appendix. We performed h-MDS in
floating point precision.
We also include results for our PyTorch implementation of an SGD-based algorithm (described later),
as well as a warm start version initialized with the high-dimensional combinatorial construction.
We compare against classical MDS (i.e.,
PCA), and the optimization-based approach~\citet{fb}, which we call
FB. The experiments for h-MDS, PyTorch SGD, PCA, and FB used dimensions of
2,5,10,50,100,200; we recorded the best resulting MAP and
distortion. Due to the large scale, we did not replicate the best FB
numbers on large graphs (i.e., Gr-QC and WordNet).
As a result, we
report their best published MAP numbers for comparison (their work does not report distortion). These entries are marked with an asterisk. For the WordNet
graph we use a random BFS tree rather than a Steiner tree. Moreover, for comparison against FB (which computes the transitive closure), we can use a weighted version of the graph that captures the ancestor relationships. The full details are in Appendix.

\paragraph*{Quality}
In Table~\ref{table:distortion_results}, we report the distortion. As
expected, when the graph is a tree or tree-like the combinatorial
construction has exceedingly low distortion. Because h-MDS is meant to recover
points exactly, we hypothesized that h-MDS would offer very low distortion
on these datasets. We confirm this hypothesis: among h-MDS, PCA, and FB, h-MDS
consistently offers the best (lowest) distortion, producing, for
example, a distortion of $0.039$ on the phylogenetic tree dataset. 
We observe that the optimization-based approach works quite well for reducing distortion,
and on tree-like datasets it is bolstered by appropriate initialization from the combinatorial construction.

Table~\ref{table:map_results} reports the MAP measure (this is shown in Table~\ref{table:wordnet_results} for WordNet), which is a
local measure. We expect that the combinatorial construction performs
well for tree-like hierarchies. This is indeed the case: on trees and
tree-like graphs, the MAP is close to 1, improving on approaches such
as FB that rely on optimization. On larger graphs like WordNet, our
approach yields a MAP of $0.989$--improving on the FB MAP result of
$0.870$ at 200 dimensions. This is exciting because the combinatorial
approach is deterministic and linear-time. In addition, it suggests
that this refined understanding of hyperbolic embeddings may be used
to improve the quality and runtime state of the art constructions. As
expected, the MAP of the combinatorial construction decreases as the
graphs are less tree-like.
Interestingly, h-MDS solved in
floating point indeed struggles with MAP. We separately confirmed that
it is indeed due to precision using a high-precision solver, which
obtains a perfect MAP---but uses 512 bits of precision. It may be
possible to compensate for this with scaling, but we did not explore
this possibility.

\paragraph*{SGD-Based Algorithm}
We also built an SGD-based algorithm implemented in PyTorch. Here the
loss function is equivalent to the PGA loss, and so is continuously
differentiable. We use this to verify two claims:

\textit{Learned Scale.}
In Table~\ref{table:pytorch}, we verify the importance of scaling that
our analysis suggests; our implementation has a simple learned scale
parameter. Moreover, we added an exponential weighting to the
distances in order to penalize long paths, thus improving the local
reconstruction. These techniques indeed improve the MAP; in
particular, the learned scale provides a better MAP at lower rank. We
hope these techniques can be useful in other embedding techniques.

\textit{Incomplete Information.} 
To evaluate our algorithm's ability to deal with incomplete
information, we examine the quality of recovered solutions as we
sample the distance matrix. We set the sampling rate of non-edges to
edges at $10:1$ following~\citet{fb}. We examine the phylogenetic
tree, which is full rank in Euclidean space. In
Figure~\ref{fig:sampling}, we are able to recover a good solution with a
small fraction of the entries for the phylogenetic tree dataset; for
example, we sampled approximately $4\%$ of the graph but provide a MAP
of $0.74$ and distortion of less than $0.6$. Understanding the sample
complexity for this problem is an interesting theoretical question.

\section{Conclusion and Future Work}
\label{sec:conclusion}

Hyperbolic embeddings embed hierarchical information with high
fidelity and few dimensions. We explored the limits of this approach
by describing scalable, high quality algorithms. We hope the
techniques here encourage more follow-on work on the exciting
techniques of \citet{fb, ucl}. As future work, we hope to explore how
hyperbolic embeddings can be most effectively incorporated into downstream
tasks and applications.

\bibliographystyle{plainnat}
\bibliography{hyperbolic_bib}

\pagebreak

\appendix

\section{Glossary of Symbols}

\begin{table*}[h]
\centering
\begin{tabular}{l l}
\toprule
Symbol & Used for \\
\midrule
$x$, $y$, $z$ & vectors in the Poincar{\'e} ball model of hyperbolic space \\
$d_H$ & metric distance between two points in hyperbolic space \\
$d_E$ & metric distance between two points in Euclidean space \\
$d_U$ & metric distance between two points in metric space $U$ \\
$d$ & a particular distance value \\
$d_{i,j}$ & the distance between the $i$th and $j$th points in an embedding \\
$\mathbb{H}_r$ & the Poincar{\'e} ball model of $r$-dimensional Hyperbolic space \\
$r$ & the dimension of a Hyperbolic space \\
$\mathbb{H}$ & Hyperbolic space of an unspecified or arbitrary dimension \\
$\mathbb{M}_r$ & the Minkowski (hyperboloid) model of $r$-dimensional Hyperbolic space \\
$f$ & an embedding \\
$\mathcal{N}_a$ & neighborhood around node $a$ in a graph \\
$R_{a,b}$ & the smallest set of closest points to node $a$ in an embedding $f$ that contains node $b$ \\
$\text{MAP}(f)$ & the mean average precision fidelity measure of the embedding $f$ \\
$D(f)$ & the distortion fidelity measure of the embedding $f$ \\
$D_{\mathrm{wc}}(f)$ & the worst-case distortion fidelity measure of the embedding $f$ \\
$G$ & a graph, typically with node set $V$ and edge set $E$ \\
$T$ & a tree \\
$a, b, c$ & nodes in a graph or tree \\
$\operatorname{deg}(a)$ & the degree of node $a$ \\
$\operatorname{deg}_{\max}$ & maximum degree of a node in a graph \\
$\ell$ & the longest path length in a graph \\
$\tau$ & the scaling factor of an embedding \\
$\operatorname{reflect}_{x \rightarrow y}$ & a reflection of $x$ onto $y$ in hyperbolic space \\
$\operatorname{arg}(z)$ & the angle that the point $z$ in the plane makes with the $x$-axis \\
$X$ & matrix of points in hyperbolic space \\
$Y$ & matrix of transformed distances \\
$\gamma$ & geodesic used in PGA \\
$w_i$ & transformed points used in PGA \\
\bottomrule
\end{tabular}
\caption{Glossary of variables and symbols used in this paper.}
\label{table:glossary}
\end{table*}

\section{Related Work}
\label{sec:related}
Our study of representation tradeoffs for hyperbolic embeddings was motivated by
exciting recent approaches towards such embeddings in \citet{fb} and
\citet{ucl}.
Earlier efforts proposed using hyperbolic spaces for routing, starting with Kleinberg's work on geographic routing \cite{Kleinberg}.
\citet{Crovella} performed hyperbolic embeddings and routing for dynamic networks.
Recognizing that the use of hyperbolic space for routing required a large number of bits to store the vertex coordinates, \citet{Eppstein} introduced a scheme for succinct embedding and routing in the hyperbolic plane.
Another very recent effort also proposes using hyperbolic cones (similar to the cones that are the fundamental building block used in \citet{sarkar} and our work) as a heuristic for embedding entailment relations, i.e. directed acyclic graphs~\cite{ganea}.
The authors also propose to optimize on the hyperbolic manifold using its exponential map, as opposed to our approach of finding a closed form for the embedding should it exist (Section~\ref{sec:MDS}). An interesting avenue for future work is to compare both optimization methods empirically and theoretically, i.e., to understand the types of recovery guarantees under noise that such methods have.

There have been previous efforts to perform multidimensional scaling in hyperbolic space (the h-MDS problem), often in the context of visualization~\cite{lamping1994laying}. Most propose descent methods in hyperbolic space (e.g.~\cite{cvetkovski2016multidimensional}, \cite{walter2004}) and fundamentally differ from ours.
Arguably the most relevant is~\citet{wilson2014spherical}, which mentions exact recovery as an intermediate result, but ultimately suggests a heuristic optimization.
Our h-MDS analysis characterizes the recovered embedding and manifold and obtains the correctly centered one---a key issue in MDS.
For example, this allows us to properly find the components of maximal variation.
Furthermore, we discuss robustness to noise and produce optimization guarantees when a perfect embedding doesn't exist.

Several papers have studied the notion of hyperbolicity of networks, starting with the seminal work on hyperbolic graphs \citet{Gromov}. More recently, \citet{Mahoney} considered the hyperbolicity of small world graphs and tree-like random graphs. \citet{Dragan} performed a survey that examines how well real-world networks can be approximated by trees using a variety of tree measures and tree embedding algorithms. To motivate their study of tree metrics, \citet{Abraham} computed a measure of tree likeness on a Internet infrastructure network. 

We use matrix completion (closure) to perform embeddings with incomplete data. Matrix completion is a celebrated problem. \citet{TaoMatrix} derive bounds on the minimum number of entries needed for completion for a 
fixed rank matrix; they also introduce a convex program for matrix completion operating at near the optimal rate.

Principal geodesic analysis (PGA) generalizes principal components analysis (PCA) for the manifold setting. It was introduced and applied to shape analysis in \cite{PGA} and extended to a probabilistic setting in \cite{ProbPGA}.
There are other variants; the geodesic principal components analysis (GPCA) of~\citet{GPCA} uses our loss function.

\section{Low-Level Formulation Details}
We plan to release our PyTorch code, high precision solver, and other
routines on Github. A few comments are helpful to understand the
reformulation. In particular, we simply minimize the squared
hyperbolic distance with a learned scale parameter, $\tau$, e.g., :
\[ \min_{x_1,\dots,x_n,\tau}\sum_{1 \leq i < j \leq n} \left(\tau d_{H}(x_i,x_j) - d_{i,j}\right)^2 \]
We typically require that $\tau \geq 0.1$.

\begin{itemize}
\item On continuity of the derivative of the loss: Note that
  \[ \partial_{x} \mathsf{acosh}(1+x) = \frac{1}{\sqrt{(1+x)^2 -1 }}
= \frac{1}{\sqrt{x(x+2)}} \text{ hence } \lim_{x \to 0} \partial_{x}
\mathsf{acosh}(1+x) = \infty. \] Thus, $\lim_{y \to x} \partial_{x}
d_{H}(x,y) = \infty$. In particular, if two points happen to get near
to one another during execution, gradient-based optimization becomes
unstable. Note that $\exp\{\mathrm{acosh(1+x)}\}$ suffers from a
similar issue, and is used in both~\cite{fb, ucl}. This change may
increase numerical instability, and the public code for these
approaches does indeed take steps like masking out updates to mitigate
\textsc{NaN}s. In contrast, the following may be more stable:
\[ \partial_{x} \mathsf{acosh}(1+x)^2 = 2 \frac{\mathsf{acosh}(1+x)}{\sqrt{x(x+2)}} \text{ and in particular } \lim_{x \to 0} \partial_{x} \mathsf{acosh}(1+x)^2 = 2\]
The limits follows by simply applying L'Hopital's rule. In turn, this
implies the square formulation is continuously differentiable. Note
that it is not convex.
  \item One challenge is to make sure the gradient computed by PyTorch
    has the appropriate curvature correction (the Riemannian metric),
    as is well explained by \citet{fb}. The modification is
    straightforward: we create a subclass of \textsc{nn.Parameter}
    called \textsc{Hyperbolic\_Parameter}. This wrapper class allows
    us to walk the tree to apply the appropriate correction to the
    metric (which amounts to multiplying $\nabla_{w} f(w)$ by
    $\frac{1}{4}(1-\|w\|^2)^2$. After calling the \textsc{backward}
    function, we call a routine to walk the autodiff tree to find such
    parameters and correct them. This allows
    \textsc{Hyperbolic\_Parameter} and traditional parameters to be
    freely mixed.

    \item We project back on the hypercube following \citet{fb} and
      use gradient clipping with bounds of $[-10^{5},10^5]$. This
      allows larger batch sizes to more fully utilize the GPU.
\end{itemize}

\section{Combinatorial Construction Proofs}
\label{app:CombinatorialProofs}

\paragraph{Precision vs. model}

We first provide a simple justification of the fact (used in Section~\ref{sec:sarkar}) that representing distances $d$ requires about $d$ bits in hyperbolic space -- independent of the model of the space.
Formally, we show that the number of bits needed to represent a space depends only on the maximal and minimal desired distances and the geometry of the space.
Thus although the bulk of our results are presented in the Poincar{\'e} sphere, our discussion on precision tradeoffs is fundamental to hyperbolic space.

A representation using $b$ bits can distinguish $2^b$ distinct points in a space $S$.
Suppose we wish to capture distances up to $d$ with error tolerance $\varepsilon$ -- concretely, say every point in the ball $B(0,d)$ must be within distance $\varepsilon$ of a represented point.
By a sphere covering argument, this requires at least $\frac{V_S(d)}{V_S(\varepsilon)}$ points to be represented, where $V_S(r)$ is the volume of a ball of radius $r$ in the geometry.
Thus at least $b=\log \frac{V_S(d)}{V_S(\varepsilon)}$ bits are needed for the representation.
Notice that $V_E(d) \sim d^n$ in Euclidean $\R^n$ space, so this gives the correct bit complexity of $n \log(d/\varepsilon)$.
In hyperbolic space, $V_H$ is exponential instead of polynomial in $d$, so $O(d)$ bits are needed in the representation (for any constant tolerance).
In particular, this is independent of the model of the space.

\paragraph{Graph embedding lower bound}

Now, we derive a lower bound on the bits of precision required for embedding a graph into $\mathbb{H}_2$.
Afterwards we prove a result bounding the precision for our extension of Sarkar's construction for the $r$-dimensional Poincar\'{e} ball $\mathbb{H}_r$. Finally, we give some details on the algorithm for this extension.

We derive the lower bound by exhibiting an explicit graph and lower bounding the precision needed to represent its nodes (for any embedding of the graph into $\mathbb{H}_2$). The explicit graph $G_m$ we consider consists of a root node with $\text{deg}_{\max}$ chains attached to it. Each of these chains has $m$ nodes for a total of $1+m(\text{deg}_{\max})$ nodes, as shown in Figure~\ref{fig:chains}.

\begin{figure}
\centering
\begin{minipage}[b]{0.4\textwidth}
\begin{tikzpicture}[scale=1.0]
\node [circle,fill,inner sep=0pt,minimum size=5pt] (a0) at (0,1) [label={above:$a_0$}]{};
\node [circle,fill,inner sep=0pt,minimum size=5pt] (a1) at (-2,0) [label={below:$a_1$}]{};
\node [circle,fill,inner sep=0pt,minimum size=5pt] (a2) at (-1,0) [label={below:$a_2$}]{};
\node [circle,fill,inner sep=0pt,minimum size=5pt] (a3) at (0,0) [label={below:$a_3$}]{};    
\node [] (dots) at (1,0) [label={center:$\ldots$}]{};    
\node [circle,fill,inner sep=0pt,minimum size=5pt] (a4) at (2,0) [label={below:$a_{\text{deg}_{\max}}$}]{};

\draw (a0) -- (a1);
\draw (a0) -- (a2);
\draw (a0) -- (a3);
\draw (a0) -- (a4);

\end{tikzpicture}
  \end{minipage}
  \begin{minipage}[b]{0.4\textwidth}
\begin{tikzpicture}[scale=1.0]
\node [circle,fill,inner sep=0pt,minimum size=5pt] (a0) at (0,1) [label={above:$a_0$}]{};
\node [circle,fill,inner sep=0pt,minimum size=5pt] (a1) at (-2,0) [label={left:$a_{1,1}$}]{};
\node [circle,fill,inner sep=0pt,minimum size=5pt] (a2) at (-1,0) [label={left:$a_{1,2}$}]{};
\node [circle,fill,inner sep=0pt,minimum size=5pt] (a3) at (0,0) [label={left:$a_{1,3}$}]{};    
\node [] (dots) at (1,0) [label={center:\ldots}]{};    
\node [circle,fill,inner sep=0pt,minimum size=5pt] (a4) at (2,0) [label={right:$a_{1,\text{deg}_{\max}}$}]{};

\node [circle,fill,inner sep=0pt,minimum size=5pt] (a5) at (-2,-1) [label={left:$a_{2,1}$}]{};
\node [circle,fill,inner sep=0pt,minimum size=5pt] (a6) at (-1,-1) [label={left:$a_{2,2}$}]{};
\node [circle,fill,inner sep=0pt,minimum size=5pt] (a7) at (0,-1) [label={left:$a_{2,3}$}]{};    
\node [] (dots2) at (1,-1) [label={center:\ldots}]{};    
\node [circle,fill,inner sep=0pt,minimum size=5pt] (a8) at (2,-1) [label={right:$a_{2,\text{deg}_{\max}}$}]{};

\node [circle,fill,inner sep=0pt,minimum size=20pt,color=white] (vdots1) at (-2,-2) [label={center:$\vdots$}]{};
\node [circle,fill,inner sep=0pt,minimum size=20pt,color=white] (vdots2) at (-1,-2) [label={center:$\vdots$}]{};
\node [circle,fill,inner sep=0pt,minimum size=20pt,color=white]  (vdots3) at (0,-2) [label={center:$\vdots$}]{};
\node [] (dots2) at (1,-2) [label={center:\ldots}]{};    
\node [circle,fill,inner sep=0pt,minimum size=20pt,color=white]  (vdots4) at (2,-2) [label={center:$\vdots$}]{};

\node [circle,fill,inner sep=0pt,minimum size=5pt] (a9) at (-2,-3) [label={below:$a_{m,1}$}]{};
\node [circle,fill,inner sep=0pt,minimum size=5pt] (a10) at (-1,-3) [label={below:$a_{m,2}$}]{};
\node [circle,fill,inner sep=0pt,minimum size=5pt] (a11) at (0,-3) [label={below:$a_{m,3}$}]{};    
\node [] (dots3) at (1,-3) [label={center:\ldots}]{};    
\node [circle,fill,inner sep=0pt,minimum size=5pt] (a12) at (2,-3) [label={below:$a_{m,\text{deg}_{\max}}$}]{};

\draw (a0) -- (a1);
\draw (a0) -- (a2);
\draw (a0) -- (a3);
\draw (a0) -- (a4);

\draw (a1) -- (a5);
\draw (a2) -- (a6);
\draw (a3) -- (a7);
\draw (a4) -- (a8);

\draw (a5) -- (vdots1);
\draw (a6) -- (vdots2);
\draw (a7) -- (vdots3);
\draw (a8) -- (vdots4);

\draw (vdots1) -- (a9);
\draw (vdots2) -- (a10);
\draw (vdots3) -- (a11);
\draw (vdots4) -- (a12);
    
\end{tikzpicture}
  \end{minipage}
\caption{Explicit graphs $G_m$ used to derive precision lower bound. Left: $m=1$ case (star graph). Right: $m>1$.}
\label{fig:chains}
\end{figure}
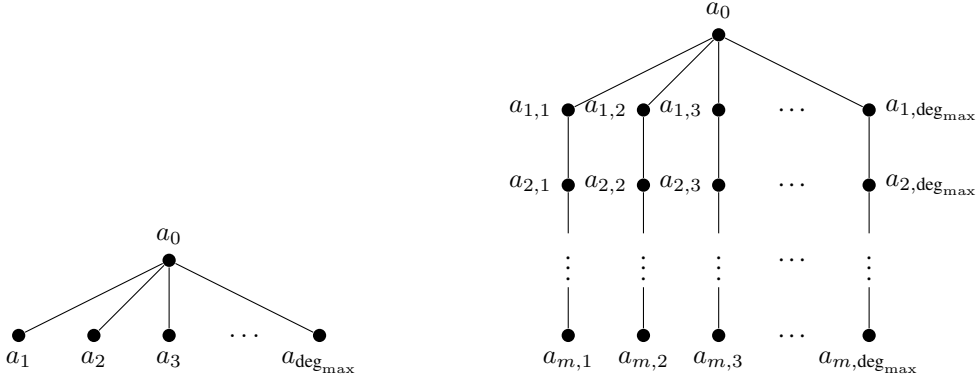

\begin{lemma}
The bits of precision needed to embed a graph with longest path $\ell$ is $\Omega\left(\frac{\ell}{\varepsilon} \log(\operatorname{deg}_{\max}) \right)$.
\end{lemma}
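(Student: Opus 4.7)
My plan is to prove the lower bound by showing that any embedding of $G_m$ into $\mathbb{H}_2$ with worst-case distortion at most $1+\varepsilon$ must have a diameter of at least $\Omega(\ell\,\varepsilon^{-1}\log\operatorname{deg}_{\max})$, and then to invoke the precision-per-distance estimate already established in Section~\ref{sec:sarkar}. Fix such an embedding $f$ and set $\alpha = \min_{u \neq v} d_H(f(u),f(v))/d_G(u,v)$, so that $d_H(f(u),f(v)) \in [\alpha\, d_G(u,v),\, \alpha(1+\varepsilon)\, d_G(u,v)]$ for every pair of nodes. The argument splits into (i) a lower bound $\alpha = \Omega(\varepsilon^{-1}\log \operatorname{deg}_{\max})$ coming from the branching at the root, and (ii) a longest-path observation that the diameter is at least $\alpha\ell$ (since two leaves at the ends of different chains have graph distance $\ell = 2m$), converted via the precision lemma into a bit-complexity bound.

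The main technical step is (i), a hyperbolic law-of-cosines computation at the root. The root $a_0$ has $\operatorname{deg}_{\max}$ neighbors, each mapped to hyperbolic distance in $[\alpha, \alpha(1+\varepsilon)]$ from $f(a_0)$. By pigeonhole some pair of these neighbors subtends an angle $\theta \leq 2\pi/\operatorname{deg}_{\max}$ at $f(a_0)$, and since they are at graph distance $2$, the embedding must place them at hyperbolic distance at least $2\alpha$. The law of cosines then gives, for the corresponding $a, b \in [\alpha, \alpha(1+\varepsilon)]$,
\[
  \cosh(2\alpha) \;\leq\; \cosh(a)\cosh(b) - \sinh(a)\sinh(b)\cos\theta.
\]
To extract a constraint on $\alpha$, I maximize the right-hand side over the allowed $a,b,\theta$. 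Computing $\partial/\partial a$ of the right-hand side and using that for large $\alpha$ the $\tanh$ values are close to $1$, the maximum on the rectangle $[\alpha,\alpha(1+\varepsilon)]^2$ is attained at the corner $a=b=\alpha(1+\varepsilon)$ with $\theta = 2\pi/\operatorname{deg}_{\max}$. Expanding using $\sinh x,\cosh x \sim e^x/2$ and $1-\cos\theta \sim 2\pi^2/\operatorname{deg}_{\max}^2$ for small $\theta$, the inequality reduces to
\[
  e^{2\alpha\varepsilon} \;\gtrsim\; \operatorname{deg}_{\max}^2/\pi^2,
\]
which rearranges to $\alpha = \Omega(\varepsilon^{-1}\log \operatorname{deg}_{\max})$.

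For (ii), the longest path in $G_m$ connects two leaves in different chains through the root and has length $\ell = 2m$, so any two such leaves must be mapped at hyperbolic distance at least $\alpha\ell = \Omega(\ell\,\varepsilon^{-1}\log \operatorname{deg}_{\max})$. I then invoke the precision argument from Section~\ref{sec:sarkar}: applying an isometric reflection sending one of these two points to $0$ puts the other at a point $z$ with $1-\|z\|^2 = 2/(\cosh(d)+1)$, hence $1-\|z\| \sim e^{-d}$, so representing it without rounding to the boundary requires $\Omega(d)$ bits. Applied to our diameter estimate, this yields the claimed $\Omega(\ell\,\varepsilon^{-1}\log\operatorname{deg}_{\max})$-bit lower bound. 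The part I expect to be most fiddly is the monotonicity check in (i)---verifying that the right-hand side of the law of cosines really is maximized on the corner rather than at an interior critical point where $\tanh(a) = \cos\theta\cdot\tanh(b)$---but for $\theta$ in the small range $[0, 2\pi/\operatorname{deg}_{\max}]$ and $\alpha$ large enough that $\tanh$ values are near $1$, a direct gradient-sign check on the boundary should settle it.
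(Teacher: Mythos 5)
Your proof is correct and relies on the same basic ingredients as the paper's---the explicit graph $G_m$, a pigeonhole argument at the root to find two embedded neighbors subtending angle $\theta\le 2\pi/\operatorname{deg}_{\max}$, a hyperbolic trigonometric estimate, and the $d$-bits-for-distance-$d$ precision fact---but it organizes them more cleanly. The paper works directly with Poincar\'e-ball norms ($v=\|y_1\|$, $u=\|x_m\|$) and the log form of $d_H$, proving the star case $m=1$ first and then chaining the resulting bound on $-\log(1-v)$ into a bound on $-\log(1-u)$ for general $m$; the two stages are entangled via an explicit recursive inequality. You instead abstract out the minimum contraction factor $\alpha=\min_{u\ne v}d_H(f(u),f(v))/d_G(u,v)$, establish $\alpha=\Omega(\varepsilon^{-1}\log\operatorname{deg}_{\max})$ once via the hyperbolic law of cosines at the root, and then obtain the diameter bound trivially as $\alpha\ell$ for the two opposite leaves, avoiding the $m=1/m>1$ case split entirely. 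This modularity is a genuine simplification: the branching argument (fixing $\alpha$) and the depth argument (multiplying by $\ell$) are decoupled. The tradeoff is that you replace the paper's direct algebraic manipulation of the log-distance formula with a law-of-cosines computation whose corner-maximization needs care. Your flagged concern about whether the maximum over the rectangle $[\alpha,\alpha(1+\varepsilon)]^2$ really sits at the corner $a=b=\alpha(1+\varepsilon)$ is legitimate but resolvable: for fixed $b$ and $\theta>0$, the RHS of the law of cosines has $\partial_a = \sinh(a)\cosh(b)-\cosh(a)\sinh(b)\cos\theta$, which is negative then positive as $a$ increases through $\tanh^{-1}(\tanh(b)\cos\theta)$, so the restricted maximum over $a$ lies at an endpoint; comparing the endpoint values asymptotically (using $\cosh,\sinh\sim e^x/2$) shows the symmetric corner dominates once $\alpha$ is moderately large, which is exactly the regime the bound forces you into. (The paper sidesteps this issue by arguing at the outset that equalizing the origin-to-child norms cannot increase worst-case distortion.) Both proofs land on the same constant-factor asymptotics.
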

\begin{proof}

We first consider the case where $m=1$. Then $G_1$ is a star with $1+\text{deg}_{\max}$ children $a_1, a_2, \ldots, a_{\text{deg}_{\max}}$. Without loss of generality, we can place the root $a_0$ at the origin $0$. 

Let $x_i = f(a_i)$ be the embedding into $\mathbb{H}_2$ for vertex $a_i$ for $0 \leq i \leq \text{deg}_{\max}$.
We begin by showing that the distortion does not increase if we equalize the distances between the origin and each child $x_i$.
Let us write $\ell_{\max} = \max_i d_H(0,x_i) $ and $\ell_{\min} = \min_i  d_H(0,x_i)$. 

What is the worst-case distortion? We must consider the maximal expansion and the maximal contraction of graph distances.
Our graph distances are either 1 or 2, corresponding to edges ($a_0$ to $a_i$) and paths of length 2 ($a_i$ to $a_0$ to $a_j$).
By triangle inequality, $\frac{d_H(x_i,x_j)}{2} \leq \frac{d_H(0, x_i)}{2}+ \frac{d_H(0, x_j)}{2} \leq \ell_{\max}$.
This implies that the maximal expansion $\max_{i \neq j} d_H(f(a_i),f(a_j))/d_G(a_i,a_j)$ is $\frac{\ell_{\max}}{1} = \ell_{\max}$ occuring at a parent-child edge.
Similarly, the maximal contraction is at least $\frac{1}{\ell_{\min}}$. With this,
\[D_{\text{wc}}(f) \geq \frac{\ell_{\max}}{\ell_{\min}}.\]
Equalizing the origin-to-child distances (that is, taking $\ell_{\max} = \ell_{\min}$) reduces the distortion. Moreover, these distances are a function of the norms $\|x_i\|$, so we set $\|x_i\| = v$ for each child. 

Next, observe that since there are $\text{deg}_{\max}$ children to place, there exists a pair of children $x,y$ so that the angle formed by $x,0,y$ is no larger than $\theta = \frac{2\pi}{\text{deg}_{\max}}$. In order to get a worst-case distortion of $1+\varepsilon$, we need the product of the maximum expansion and maximum contraction to be no more than $1+\varepsilon$. The maximum expansion is simply $d_H(0,x)$ while the maximum contraction is $\frac{2}{d_H(x,y)}$, so we wan
\[2d_H(0,x) \leq (1+\varepsilon) d_H(x,y).\]

We use the log-based expressions for hyperbolic distance:
\[d_H(0,x) = \log \left(\frac{1+v}{1-v} \right),\]

and
\begin{align*}
d_H(x,y) &= 2 \log \left( \frac{\|x-y\| + \sqrt{\|x\|^2\|y\|^2 -2\langle x,y \rangle + 1}}{\sqrt{(1-\|x\|^2)(1-\|y\|^2)}} \right) \\
&=2 \log \left(\frac{\sqrt{2v^2(1-\cos \theta)} + \sqrt{v^4-2v^2 \cos \theta + 1}}{1-v^2} \right).
\end{align*}

This leaves us with 
\[  \log \left(\frac{\sqrt{2v^2(1-\cos \theta)} + \sqrt{v^4-2v^2 \cos \theta + 1}}{1-v^2} \right) (1+\varepsilon) \geq \log \left(\frac{1+v}{1-v} \right).\]

Now, since $1 > v^2$, we have that $\sqrt{2(1-\cos \theta)} \geq\sqrt{2v^2(1-\cos \theta)}$. Some algebra shows that  $\sqrt{3(1-\cos \theta)} \geq\sqrt{v^4 - 2v^2\cos \theta + 1}$, so that we can upper bound the left-hand side to write 
\[  \log \left(\frac{(1+\sqrt{\frac{3}{2}}) \sqrt{2(1-\cos \theta)}}{1-v^2} \right) (1+\varepsilon) \geq \log \left(\frac{1+v}{1-v} \right) .\]

Next we use the small angle approximation $\cos(\theta) = 1-\theta^2/2$ to get $\sqrt{2(1-\cos \theta)} = \theta$. Now we have
\[  \log \left(\frac{(1+\sqrt{\frac{3}{2}})  \theta }{1-v^2} \right) (1+\varepsilon) \geq \log \left(\frac{1+v}{1-v} \right) .\]

Since $v < 1$, $\frac{1}{1-v} > \frac{1}{1-v^2}$ and $\frac{1+v}{1-v} \geq \frac{1}{1-v}$, so we can upper bound the left-hand side and lower bound the right-hand side:
\[  \log \left(\frac{(1+\sqrt{\frac{3}{2}})  \theta }{1-v} \right) (1+\varepsilon) \geq  \log \left(\frac{1}{1-v} \right).\]

Rearranging,
\[ -\log(1-v) \geq -\log\left( \left(1+\sqrt{\frac{3}{2}}\right) \theta\right) \frac{1+\varepsilon}{\varepsilon}.\]

Recall that $\theta = \frac{2\pi}{\text{deg}_{\max}}$. Then we have that
\[-\log(1-v) \geq \left(\frac{1+\varepsilon}{\varepsilon} \right) \left(\log (\text{deg}_{\max}) - \log((2+\sqrt{6})\pi) \right),\]
so that
\[-\log(1-v) = \Omega\left(\frac{1}{\varepsilon} \log (\text{deg}_{\max}) \right).\]

Since $v = \|x\| = \|y\|$, $-\log(1-v)$ is precisely the required number of bits of precision, so we have our lower bound for the $m=1$ case. 

Next we analyze the $m>1$ case. Consider the embedded vertices $x_1, x_2, \ldots, x_m$ corresponding to one chain and $y_1, y_2, \ldots, y_m$ corresponding to another. There exists a pair of chains such that the angle formed by $x_m, 0, y_1$ is at most $\theta = \frac{2\pi}{\text{deg}_{\max}}$. Let $u = \|x_m\|$ and $v = \|y_1\|$. From the $m=1$ case, we have a lower bound on $-\log(1-v)$; we will now lower bound $-\log(1-u)$. The worst-case distortion we consider uses the contraction given by the path $x_m \rightarrow x_{m-1} \rightarrow \cdots \rightarrow x_1 \rightarrow 0 \rightarrow y_1$; this path has length $m+1$. The expansion is just the edge between $0$ and $y_1$. Then, to satisfy the worst-case distortion $1+\varepsilon$, we need
\[(m+1)d_H(0,y_1) \leq (1+\varepsilon)d_H(x_m,y_1).\]

Using the hyperbolic distance formulas, we can rewrite this as 
\[ 2 \log \left( \frac{\|x_m-y_1\| + \sqrt{\|x_m\|^2\|y_1\|^2 -2\langle x_m,y_1 \rangle + 1}}{\sqrt{(1-\|x_m\|^2)(1-\|y_1\|^2)}} \right) (1+\varepsilon) \geq (m+1) \log \left(\frac{1+v}{1-v} \right),\]

or,
\[ 2 \log \left( \frac{ \sqrt{u^2+v^2-2uv\cos \theta} + \sqrt{u^2v^2-2uv\cos\theta + 1}}{\sqrt{(1-u^2)(1-v^2)}} \right) (1+\varepsilon) \geq (m+1) \log \left(\frac{1+v}{1-v} \right).\]

Next,
\begin{align*}
2 \log &\left( \frac{ \sqrt{u^2+v^2-2uv\cos \theta} + \sqrt{u^2v^2-2uv\cos\theta + 1}}{\sqrt{(1-u^2)(1-v^2)}} \right) \\
 &\leq 2 \log \left( \frac{(1+\sqrt{\frac{3}{2}})\theta}{\sqrt{(1-u^2)(1-v^2)}} \right) =  \log \left( \frac{(1+\sqrt{\frac{3}{2}})^2\theta^2}{(1-u^2)(1-v^2)} \right) \\
&\leq  \log \left( \frac{(1+\sqrt{\frac{3}{2}})^2\theta^2}{(1-u)(1-v)} \right).
\end{align*}
In the first step, we used the same arguments as earlier. Applying this result and using $\frac{1+v}{1-v} \geq \frac{1}{1-v}$, we have
\[   \log \left( \frac{(1+\sqrt{\frac{3}{2}})^2\theta^2}{(1-u)(1-v)} \right)(1+\varepsilon) \geq (m+1) \log \left(\frac{1}{1-v} \right),\]
or,
\[   \log \left( \frac{(1+\sqrt{\frac{3}{2}})^2\theta^2}{1-u} \right)(1+\varepsilon) \geq (m-\varepsilon) \log \left(\frac{1}{1-v} \right).\]

Next we can apply the bound on $-\log(1-v)$.
\begin{align*}
\log\left(\frac{1}{1-u}\right) &\geq -\log\left((1+\sqrt{\frac{3}{2}})^2\theta^2\right) + \left( \frac{m-\varepsilon}{1+\varepsilon} \right) \log \left(\frac{1}{1-v} \right) \\
&\geq  -\log\left((1+\sqrt{\frac{3}{2}})^2\theta^2\right) +  \left( \frac{m-\varepsilon}{1+\varepsilon} \right) \left(\frac{1+\varepsilon}{\varepsilon} \right) \left(\log (\text{deg}_{\max}) - \log((2+\sqrt{6})\pi) \right)) \\
&= \left( \frac{m-\varepsilon}{\varepsilon} \right)  \log(\text{deg}_{\max}) - \left( \frac{m-\varepsilon}{\varepsilon} \right) \log((2+\sqrt{6})\pi) - \frac{1}{2}  \left(\log (\text{deg}_{\max}) - \log((2+\sqrt{6})\pi) \right).
\end{align*}
Here, we applied the relationship between $\theta$ and $\text{deg}_{\max}$ we derived earlier. To conclude, note that the longest path in our graph is $\ell = 2m$. Then, we have that 
\[-\log(1-u) = \Omega \left(\frac{\ell}{\varepsilon} \log(\text{deg}_{\max})\right),\]
as desired.
\end{proof}

\paragraph{Combinatorial construction upper bounds}

Next, we prove our extension of Sarkar's construction for $\mathbb{H}_r$, restated below.
 
{\bf Proposition 3.1.} \textit{The generalized $\mathbb{H}_r$ combinatorial construction has distortion at most $1+\varepsilon$ and requires at most $O(\frac{1}{\varepsilon}\frac{\ell}{r} \log \operatorname{deg}_{\max})$ bits to represent a node component for $r \leq (\log \operatorname{deg}_{\max})+1$, and $O(\frac{1}{\varepsilon}\ell)$ bits for $r > (\log \operatorname{deg}_{\max})+1$.}

\begin{proof}

The combinatorial construction achieves worst-case distortion bounded by $1+\varepsilon$ in two steps \cite{sarkar}.
First, it is necessary to scale the embedded edges by a factor of $\tau$ sufficiently large to enable each child of a parent node to be placed in a disjoint cone.
Note that there will be a cone with angle $\alpha$ less than $\frac{\pi}{\operatorname{deg}_{\max}}$.
The connection between this angle and the scaling factor $\tau$ is governed by $\tau = -\log( \tan \alpha/2)$.
As expected, as $\operatorname{deg}_{\max}$ increases, $\alpha$ decreases, and the necessary scale $\tau$ increases.

This initial step provides a Delaunay embedding (and thus a MAP of 1.0), but perhaps not sufficient distortion.
The second step is to further scale the points by a factor of $\frac{1+\varepsilon}{\varepsilon}$; this ensures the distortion upper bound. 

Our generalization to the Poincar\'{e} ball of dimension $r$ will modify the first step by showing that we can pack more children around a parent while maintaining the same angle.
In other words, for a fixed number of children we can increase the angle between them, correspondingly decreasing the scale.
We use the following generalization of cones for $\mathbb{H}_r$, defined by the maximum angle $\alpha \in [0,\pi/2]$ between the axis and any point in the cone.
Let cone $C(X,Y, \alpha)$ be the cone at point $X$ with axis $\vec{XY}$ and cone angle $\alpha$: $C(X,Y, \alpha) = \left\{ Z \in \mathbb{H}_{r} : \langle Z - X, Y - X \rangle \geq  \|Z-X\|\|Y-X\| \cos {\alpha} \right\}.$
We seek the maximum angle $\alpha$ for which $\operatorname{deg}_{\max}$ disjoint cones can be fit around a sphere.

Supposing $r-1 \le \log \operatorname{deg}_{\max}$, we use the following lower bound \cite{Jenssen} on the number of unit vectors $A(r,\theta)$ that can be placed on the unit sphere of dimension $r$ with pairwise angle at least $\theta$:
\[A(r, \theta) \geq (1+o(1)) \sqrt{2\pi r} \frac{\cos \theta}{(\sin \theta)^{r-1}}.\]

Consider taking angle
\[\theta = \asin({\operatorname{deg}_{\max}}^{-\frac{1}{r-1}}).\]
Note that
\[
  {\operatorname{deg}_{\max}}^{-\frac{1}{r-1}} = \exp \log {\operatorname{deg}_{\max}}^{-\frac{1}{r-1}} = \exp\left( -\frac{\log d}{r-1} \right) \le 1/e,
\]
which implies that $\theta$ is bounded from above and $\cos \theta$ is bounded from below.
Therefore
\[
  \operatorname{deg}_{\max} = \frac{1}{(\sin \theta)^{r-1}} \le O(1)\frac{\cos \theta}{(\sin \theta)^{r-1}} \le A(r,\theta).
\]
So it is possible to place $\operatorname{deg}_{\max}$ children around the sphere with pairwise angle $\theta$, or equivalently place $\operatorname{deg}_{\max}$ disjoint cones with cone angle $\alpha = \theta/2$.
Note the key difference compared to the two-dimensional case where $\alpha = \frac{\pi}{\operatorname{deg}_{\max}}$; here we reduce the angle's dependence on the degree by an exponent of $\frac{1}{r-1}$.

It remains to compute the explicit scaling factor $\tau$ that this angle yields; recall that $\tau = -\log( \tan \alpha/2)$ suffices~\cite{sarkar}.
We then have
\begin{align*}
  \tau &= -\log(\tan(\theta/4)) = -\log \left(\frac{\sin (\theta/2)}{1+\cos (\theta/2)} \right) = \log \left(\frac{1+\cos (\theta/2)}{\sin (\theta/2)} \right)
  \\&\le \log \left( \frac{2}{\sin(\theta/2)} \right) = \log \left( \frac{4\cos(\theta/2)}{\sin \theta} \right)
  \\&\le \log \left( \frac{4}{{\operatorname{deg}_{\max}}^{-\frac{1}{r-1}}} \right) = O\left(\frac{1}{r} \log \operatorname{deg}_{\max }\right)
  .
\end{align*}

 This quantity tells us the scaling factor without considering distortion (the first step).
 To yield the $1+\varepsilon$ distortion, we just increase the scaling by a factor of $\frac{1+\varepsilon}{\varepsilon}$.
 The longest distance in the graph is the longest path $\ell$ multiplied by this quantity.
    
Putting it all together, for a tree with longest path $\ell$, maximum degree $\operatorname{deg}_{\max}$ and distortion at most $1+\varepsilon$, the components of the embedding require (using the fact that distances $\|d\|$ require $d$ bits),
\[ O\left(      \frac{1}{\varepsilon}\frac{\ell}{r} \log d_{\max}  \right)\]
bits per component. 
This big-$O$ is with respect to $\operatorname{deg}_{\max}$ and any $r \le \log \operatorname{deg}_{\max} + 1$.

When $r > \log \operatorname{deg}_{\max} + 1$, $O\left( \frac{1}{\varepsilon}{\ell} \right)$ is a trivial upper bound.
Note that this cannot be improved asymptotically: As $\operatorname{deg}_{\max}$ grows, the minimum pairwise angle approaches $\pi/2$,%
\footnote{Given points $x_1, \dots, x_n$ on the unit sphere, $0 \le \| \sum x_i \|_2^2 = n + \sum_{i\neq j} \langle x_i, x_j \rangle$ implies there is a pair such that $x_i \cdot x_j \ge -\frac{1}{n-1}$, i.e. an angle bounded by $\cos^{-1}(-1/(n-1))$.}
so that $\tau = \Omega(1)$ irrespective of the dimension $r$.
\end{proof}

Next, we provide more details on the coding-theoretic child placement construction for $r$-dimensional embeddings. Recall that children are placed at the vertices of a hypercube inscribed into the unit hypersphere, with components in $\frac{\pm 1}{\sqrt{r}}$. These points are indexed by sequences ${a} \in \{0,1\}^r$ so that

\[{ x}_{ a} = \left( \frac{(-1)^{a_1}}{\sqrt{r}}, \frac{(-1)^{a_2}}{\sqrt{r}} , \ldots, \frac{(-1)^{a_r}}{\sqrt{r}} \right).\]

The Euclidean distance between ${ x}_{ a}$ and ${ x}_{ b}$ is a function of the Hamming distance $d_{\text{Hamming}}({ a},{ b})$ between ${ a}$ and ${ b}$. The Euclidean distance is exactly $2\sqrt{\frac{d_{\text{Hamming}}({ a},{ b})}{r}}$. Therefore, we can control the distances between the children by selecting a set of binary sequences with a prescribed minimum Hamming distance---a binary error-correcting code---and placing the children at the resulting hypercube vertices.

We introduce a small amount of terminology from coding theory. A binary code $\mathcal{C}$ is a set of sequences ${\bf a} \in \{0,1\}^r$. A $[r,k,h]_2$ code $\mathcal{C}$ is a binary linear code with length $r$ (i.e., the sequences are of length $r$), size $2^k$ (there are $2^k$ sequences), and minimum Hamming distance $h$ (the minimum Hamming distance between two distinct members of the code is $h$). 

The Hadamard code $\mathcal{C}$ has parameters $[2^k, k, 2^{k-1}]$. If $r=2^k$ is the dimension of the space, the Hamming distance between two members of $\mathcal{C}$ is at least $2^{k-1} = r/2$. Then, the distance between two distinct vertices of the hypercube ${ x}_{ a}$ and ${ x}_{ b}$ is $2\sqrt{\frac{r/2}{r}} = 2\sqrt{1/2} = \sqrt{2}$. Moreover, we can place up to $2^k=r$ points at least at this distance.

To build intuition, consider placing children on the unit circle ($r=2$) compared to the $r=128$-dimensional unit sphere. For $r=2$, we can place up to 4 points with pairwise distance at least $\sqrt{2}$. However, for $r=128$, we can place up to 128 children while maintaining this distance.

We briefly describe a few more practical details. Note that the Hadamard code is parametrized by $k$. To place $c+1$ children, take $k = \lceil \log_2(c+1) \rceil$. However, the desired dimension $r'$ of the embedding might be larger than the resulting code length $r=2^k$. We can deal with this by repeating the codeword. If there are $r'$ dimensions and $r|r'$, then the distance between the resulting vertices is still at least $\sqrt{2}$. Also, recall that when placing children, the parent node has already been placed. Therefore, we perform the placement using the hypercube, and rotate the hypersphere so that one of the $c+1$ placed nodes is located at this parent.

\paragraph{Embedding the ancestor transitive closure}
Prior work embeds the transitive closure of the WordNet noun hypernym graph \cite{fb}. Here, edges are placed between each word and its hypernym ancestors; MAP is computed over edges of the form (word, hypernym), or, equivalently, edges $(a,b)$ where $b\in \mathcal{A}(a)$ is an ancestor of $a$.

In this section, we show how to achieve arbitrarily good MAP on these types of transitive closures of a tree by embedding a weighted version of the tree (which we can do using the combinatorial construction with arbitrarily low distortion for any number dimensions). The weights are simply selected to ensure that nodes are always nearer to their ancestors than to any other node.

Let $T = (V,E)$ be our original graph. We recursively produce a weighted version of the graph called $T'$ that satisfies the desired property. Let $s$ be the depth of node $a \in V$. We weight each of the edges $(a,c)$, where $c$ is a child of $a$ with weight $2^s$. Now we show the following property: 

\begin{proposition}
Let $b \in \mathcal{A}(a)$ be an ancestor of $a$ and $e \not\in \mathcal{A}(a)$ be some node not an ancestor of $a$. Then,
\[d_G(a,b) < d_G(a,e).\]
\end{proposition}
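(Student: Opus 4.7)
The plan is to reduce the claim to a geometric-series computation. Let $d(v)$ denote the depth of a node $v$ in $T$ (root at depth $0$) and set $D = d(a)$. With the prescribed weights---the edge out of a depth-$s$ node has weight $2^s$---the total length of the up-path from any descendant $v$ to any ancestor $u$ is
\[ \sum_{k=d(u)}^{d(v)-1} 2^k = 2^{d(v)} - 2^{d(u)}. \]

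Applying this formula gives $d_G(a,b) = 2^D - 2^{d(b)}$; splitting the $a \to e$ path at the least common ancestor $\ell$ of $a$ and $e$ and applying the formula to each leg yields
\[ d_G(a,e) = \left(2^D - 2^{d(\ell)}\right) + \left(2^{d(e)} - 2^{d(\ell)}\right). \]
Since $e \notin \mathcal{A}(a)$ we have $\ell \neq e$, so $d(\ell) < d(e)$, and in particular $d(\ell)+1 \le d(e)$.

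Both $b$ and $\ell$ lie on the path from $a$ to the root, so they are totally ordered, which lets me split into two cases. If $\ell$ is an ancestor of $b$ (including $\ell = b$), then the $a \to e$ path passes through $b$, so
\[ d_G(a,e) = d_G(a,b) + d_G(b,\ell) + d_G(\ell,e) > d_G(a,b) \]
since $d_G(\ell,e) > 0$. If instead $\ell$ is a strict descendant of $b$, then $d(b) < d(\ell)$, and subtracting the closed forms gives
\[ d_G(a,e) - d_G(a,b) = 2^{d(e)} + 2^{d(b)} - 2^{d(\ell)+1} \ge 2^{d(b)} > 0, \]
using $2^{d(\ell)+1} \le 2^{d(e)}$.

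There is no real obstacle; the computation is essentially forced once the weights are expanded into a telescoping sum. The only design decision is the exponential weighting itself, which guarantees that a single downward edge from $\ell$ outweighs the entire upward path from $\ell$ to any higher ancestor---precisely what makes a detour toward a non-ancestor strictly longer than the route to an ancestor.
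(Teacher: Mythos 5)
Your proof is correct, and it rests on the same engine as the paper's: the telescoping sum $\sum_{k=d(u)}^{d(v)-1} 2^k = 2^{d(v)} - 2^{d(u)}$ and the observation that one downward step from the LCA $\ell$ outweighs the whole upward chain from $\ell$. The packaging differs, though. The paper decouples the two quantities and establishes a uniform threshold: every proper ancestor is within distance $2^D-1$ of $a$ (bounded by the distance to the root), while every non-ancestor is at distance at least $2^D$, with the second bound split into ``$e$ is a descendant of $a$'' and ``$e$ is off to the side.'' You instead subtract $d_G(a,b)$ from $d_G(a,e)$ directly and split on whether the LCA $\ell$ of $a$ and $e$ sits at or above $b$ (in which case the $a \to e$ path is forced through $b$) or strictly below $b$ (in which case the closed-form difference is $\geq 2^{d(b)} > 0$). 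Your case split folds the paper's ``$e$ a descendant of $a$'' case into the $\ell$-below-$b$ branch with $\ell = a$, which is a nice unification. Both routes are valid; the paper's threshold formulation makes the ``separation at $2^D$'' structural fact explicit, whereas yours is a more computation-forward direct comparison. No gaps.
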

\begin{proof}
Let $a$ be at depth $s$. First, the farthest ancestor from $a$ is the root, at distance $2^{s-1}+2^{s-2}+ \ldots + 2+1 = 2^s-1$. Thus $d_G(a,b) \leq 2^s-1$. 

If $e$ is a descendant of $a$, then $d_G(a,e)$ is at least $2^s$ Next, if $e$ is neither a descendant nor an ancestor of $a$, let $f$ be their nearest common ancestor, and let the depths of $a,e,f$ be $s,s_2,s_3$, respectively, where $s_3 < \min\{s_1,s_2\}$. We have that 
\begin{align*}
d_G(a,e) &= (2^{s-1}+\ldots+2^{s_3}) + (2^{s_2-1} + \ldots+2^{s_3}) \\
&= 2^{s} - 2^{s_3} + 2^{s_2} - 2^{s_3} \\
&=2^{s} + 2^{s_2} - 2^{s_3+1} \\
&\geq 2^{s} \\
&> d_G(a,b).
\end{align*}
The fourth line follows from $s_2 > s_3$. This concludes the argument.
\end{proof}

Therefore, embedding the weighted tree $T'$ with the combinatorial construction enables us to keep all of a word's ancestors nearer to it than any other word. This enables us to embed a transitive closure hierarchy (like WordNet's) while still embedding a nearly tree-like graph.
\footnote{Note that further separation can be achieved by picking weights with a base larger than $2$.}
Furthermore, the desirable properties of the construction still carry through (perfect MAP on trees, linear-time, etc).


\section{Proof of h-MDS Results}
\label{sec:mds-proof}

We first prove the condition that $X^T u = 0$ is equivalent to pseudo-Euclidean
centering.

\begin{proof}[Proof of Lemma~\ref{lmm:pe-centered}]
In the hyperboloid model, the variance term $\Psi$ can be written as
\begin{align*}
  \Psi(z; x_1, x_2, \ldots, x_n)
  &=
  \sum_{i=1}^k \sinh^2(d_H(x_i, z)) \\
  &=
  \sum_{i=1}^k \left( \cosh^2(d_H(x_i, z)) - 1 \right) \\
  &=
  \sum_{i=1}^k \left( (x_i^T Q z)^2 - 1 \right) \\
  &=
  \sum_{i=1}^k \left( (x_{0,i} z_0 - \vec{x}_i^T \vec{z})^2 - 1 \right) \\
  &=
  \sum_{i=1}^k \left( \left(x_{0,i} \sqrt{ 1 + \| \vec{z} \|^2 } - \vec{x}_i^T \vec{z} \right)^2 - 1 \right).
\end{align*}
The derivative of this with respect to $\vec{z}$ is
\begin{align*}
  \nabla_{\vec{z}} \Psi(z; x_1, x_2, \ldots, x_n)
  &=
  2 \sum_{i=1}^k 
  \left(x_{0,i} \sqrt{ 1 + \| \vec{z} \|^2 } - \vec{x}_i^T \vec{z} \right)
  \left(x_{0,i} \frac{\vec{z}}{\sqrt{ 1 + \| \vec{z} \|^2 }} - \vec{x}_i \right).
\end{align*}
At $\vec{z} = 0$ (or equivalently $z = e_0$), this becomes
\begin{align*}
  \left. \nabla_{\vec{z}} \Psi(z; x_1, x_2, \ldots, x_n) \right|_{\vec{z} = 0}
  &=
  2 \sum_{i=1}^k 
  \left(x_{0,i} \sqrt{ 1 + 0 } - 0 \right)
  \left(x_{0,i} \frac{0}{\sqrt{ 1 + 0 }} - \vec{x}_i \right) \\
  &=
  -2 \sum_{i=1}^k x_{0,i} \vec{x}_i.
\end{align*}
If we define the matrix $X \in \R^{n \times k}$ such that $X^T e_i = \vec{x}_i$ and the vector $u \in \R^k$ such that $u_i = x_{0,i}$, then
\begin{align*}
  \left. \nabla_{\vec{z}} \Psi(z; x_1, x_2, \ldots, x_n) \right|_{\vec{z} = 0}
  &=
  -2 \sum_{i=1}^k X^T e_i e_i^T u \\
  &=
  -2 X^T u.
\end{align*}
\end{proof}

\paragraph*{Centering and Geodesic Submanifolds}
A well-known property of the hyperboloid model is that the geodesic submanifolds on $\mathbb{M}_r$ are exactly the linear subspaces of $\R^{r+1}$ intersected with the hyperboloid model (Corollary A.5.5. from~\cite{Benedetti}).
This is analogous to how the affine subspaces of $\R^r$ are the linear subspaces of $\R^{r+1}$ intersected with the homogeneous-coordinates model of $\R^r$.
Notice that this directly implies that any geodesic submanifold can be written as a geodesic submanifold centered on any of the points in that manifold.
To be explicit with the definitions:
\begin{definition}
A geodesic submanifold is a subset $S$ of a manifold such that for any two points $x, y \in S$, the geodesic from $x$ to $y$ is fully contained within $S$.
\end{definition}

\begin{definition}
A geodesic submanifold rooted at a point $x$, given some local subspace of its tangent bundle $T$, is the subset $S$ of the manifold that is the union of all the geodesics through $x$ that are tangent at $x$ in a direction contained in $T$.
\end{definition}

Now we prove that centering with the pseudo-Euclidean mean preserves geodesic submanifolds.

First, we need the following technical lemma showing that projection to a manifold decreases distances.
\begin{lemma}
  \label{lmm:manifold-projection}
  Consider a dimension-$r$ geodesic submanifold $S$ and point $\bar x$ outside of it.
  Let $z$ be the projection of $\bar x$ onto $S$.
  Then for any point $x \in S$, $d_H(x, \bar x) > d_H(x, z)$.
\end{lemma}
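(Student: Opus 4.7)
The plan is to reduce the claim to the hyperbolic Pythagorean theorem by showing that the geodesic from $\bar x$ to its projection $z$ meets $S$ orthogonally at $z$. Once orthogonality is established, the strict inequality follows immediately from the fact that $\cosh$ is strictly increasing on $[0, \infty)$ together with $\bar x \ne z$.

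First, I would handle the trivial case $x = z$: since $\bar x \notin S$ and $z \in S$ we have $z \ne \bar x$, so $d_H(x, \bar x) = d_H(z, \bar x) > 0 = d_H(x, z)$. From now on assume $x \ne z$.

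Next, I would establish orthogonality of the projection. Since $z$ minimizes $d_H(\cdot, \bar x)$ over $S$, the restriction of this smooth function to any curve in $S$ through $z$ has a critical point at $z$. In particular, for every tangent vector $v \in T_z S$, the directional derivative of $d_H(\cdot, \bar x)$ at $z$ in direction $v$ vanishes. Standard Riemannian calculus identifies this derivative with the inner product of $v$ and the unit tangent vector at $z$ to the geodesic from $z$ to $\bar x$. Hence that geodesic is orthogonal to $T_z S$, and so the triangle with vertices $x, z, \bar x$ has a right angle at $z$ (the side $zx$ lies in $S$ because $S$ is a geodesic submanifold containing both $z$ and $x$).

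Finally, I would invoke the hyperbolic Pythagorean theorem, which for a right hyperbolic triangle at $z$ states
\[
  \cosh\!\left(d_H(x, \bar x)\right)
  \;=\;
  \cosh\!\left(d_H(x, z)\right)\,\cosh\!\left(d_H(z, \bar x)\right).
\]
Because $\bar x \notin S$ we have $d_H(z, \bar x) > 0$, so $\cosh(d_H(z, \bar x)) > 1$, giving $\cosh(d_H(x, \bar x)) > \cosh(d_H(x, z))$. Since $\cosh$ is strictly increasing on $[0, \infty)$, the desired strict inequality $d_H(x, \bar x) > d_H(x, z)$ follows.

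The main obstacle, such as it is, lies in the orthogonality step, which relies on the facts that (i) the projection onto a complete geodesic submanifold exists and is unique, and (ii) the gradient of the distance function points along the connecting geodesic. Both are standard in Hadamard manifolds, and since hyperbolic space is Hadamard (simply connected and of nonpositive sectional curvature) these facts apply directly; one could alternatively give a model-specific argument using the hyperboloid model, where a geodesic submanifold is the intersection of $\mathbb{M}_r$ with a linear subspace of $\R^{r+1}$, and verify orthogonality via the Minkowski form.
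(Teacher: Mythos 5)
Your proposal is correct and follows essentially the same route as the paper: both establish that $x, z, \bar x$ form a right angle at $z$ and then conclude via the hyperbolic Pythagorean theorem together with the strict monotonicity of $\cosh$. The only difference is that you flesh out the orthogonality step (via first-order optimality of the projection and the gradient of the distance function) which the paper simply asserts, and you separately treat the edge case $x=z$; these are welcome clarifications but do not change the argument.
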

\begin{proof}
  As a consequence of the projection, the points $x, z, \bar x$ form a right angle.
  From the hyperbolic Pythagorean theorem, we know that
  \[
    \cosh(d_H(x, \bar x)) = \cosh(d_H(x, z)) \cosh(d_H(z, \bar x)).
  \]
  Since $\cosh$ is increasing and at least $1$ (with equality only at $\cosh(0) = 1$), this implies that
  \[
    d_H(x, \bar x) > d_H(x, z).
  \]
\end{proof}

\begin{lemma}
If some points $x_1, \ldots, x_k$ lie in a dimension-$r$ geodesic submanifold $S$, then both a Karcher mean and a pseudo-Euclidean mean lie in this submanifold.
Equivalently, if the points lie in a submanifold, then this submanifold can be written as centered at the Karcher mean or the pseudo-Euclidean mean. 
\label{lemmaSubmanifoldCentering}
\end{lemma}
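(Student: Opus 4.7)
The plan is to argue by contradiction: assume a Karcher or pseudo-Euclidean mean $\bar x$ of the points $x_1, \ldots, x_k$ lies outside $S$, and exhibit a nearby point at which the corresponding objective is strictly smaller, violating the local-minimum property. The key tool is Lemma~\ref{lmm:manifold-projection} together with the hyperbolic Pythagorean identity it relies on.

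First, I would let $z \in S$ denote the orthogonal projection of $\bar x$ onto $S$ and set $\rho = d_H(\bar x, z) > 0$. The projection is well-defined because $S = V \cap \mathbb{M}_r$ for a linear subspace $V$ (Corollary A.5.5 of~\cite{Benedetti}), and hyperbolic distance-squared is strictly convex along geodesics. I would then parametrize the unit-speed geodesic $\gamma : [0, \rho] \to \mathbb{M}_r$ from $\bar x = \gamma(0)$ to $z = \gamma(\rho)$.

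Because $z$ is the orthogonal projection of $\bar x$ onto $S$, the geodesic $\gamma$ meets $S$ perpendicularly at $z$; hence for every $x_i \in S$ the geodesic triangle with vertices $\gamma(t), z, x_i$ is right-angled at $z$. The hyperbolic Pythagorean theorem (exactly the identity used inside the proof of Lemma~\ref{lmm:manifold-projection}) then yields
\[
  \cosh d_H(\gamma(t), x_i) \;=\; \cosh(\rho - t)\,\cosh d_H(z, x_i).
\]
Since $\cosh$ is strictly increasing on $[0,\infty)$, $d_H(\gamma(t), x_i)$ is strictly decreasing in $t$ on $[0, \rho]$ for every $i$. Consequently the Karcher objective $\sum_i d_H(\gamma(t), x_i)^2$ is a sum of strictly decreasing positive terms, and the pseudo-Euclidean objective rewrites as $\sum_i \sinh^2 d_H(\gamma(t), x_i) = \sum_i \bigl(\cosh^2(\rho - t)\cosh^2 d_H(z, x_i) - 1\bigr)$, which is also strictly decreasing in $t$. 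Therefore $\gamma(t)$ for arbitrarily small $t > 0$ lies in every neighborhood of $\bar x$ and attains strictly smaller objective, contradicting local optimality. Hence $\bar x \in S$, and the same reasoning covers both means simultaneously.

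The main obstacle is establishing the orthogonality claim at $z$ and deducing the Pythagorean identity uniformly for every $x_i \in S$: one must verify that the geodesic realizing $d_H(\bar x, S)$ meets $S$ perpendicularly and that this perpendicularity translates, via the Minkowski inner product $\langle \cdot , \cdot\rangle_Q$, into the product formula above regardless of which $x_i \in S$ one pairs with $z$. This reduces cleanly to a linear-algebraic statement about Minkowski-orthogonal decompositions on the linear subspace $V$ cutting out $S$, which is why the argument goes through uniformly for the whole family $\{x_i\}$.
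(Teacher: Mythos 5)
Your proof is correct and uses the same essential tool as the paper: orthogonal projection onto $S$ combined with the hyperbolic Pythagorean theorem (which is exactly how the paper proves Lemma~\ref{lmm:manifold-projection}). The paper applies that lemma only at the two endpoints $\bar x$ and its projection $z$, concluding that the Frechet (or pseudo-Euclidean) variance is strictly smaller at $z$, hence a global minimizer lies in $S$; strictly speaking this establishes existence of a mean in $S$ but does not by itself rule out local minima outside $S$. Your refinement, parametrizing the geodesic $\gamma$ from $\bar x$ to $z$ and using the right-angle-at-$z$ structure to show $\cosh d_H(\gamma(t), x_i) = \cosh(\rho - t)\cosh d_H(z, x_i)$ decreases strictly in $t$ for every $i$, gives the stronger statement that no local minimizer can lie off $S$, which matters because the paper defines the pseudo-Euclidean mean as any local minimum of $\Psi$. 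That extra care is a genuine improvement in rigor; otherwise the two arguments are the same in spirit and both hinge on the linear-subspace characterization of geodesic submanifolds in the hyperboloid model to make the perpendicularity claim uniform over all $x_i \in S$.
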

\begin{proof}
Suppose by way of contradiction that there is a Karcher mean $\bar x$ that lies outside this submanifold $S$.
Then, consider the projection $z$ of $\bar x$ onto $S$.
From Lemma~\ref{lmm:manifold-projection}, projecting onto $S$ has strictly decreased the distance to all the points on $S$.

As a result, the Frechet variance
\[
  \sum_{i=1}^k d_H^2(x_i, \bar x)
\]
also decreases when $\bar x$ is projected onto $S$.
From this, it follows that there is a minimum value of the Frechet variance (a Karcher mean) that lies on $S$.
An identical argument works for the pseudo-Euclidean distance, since the pseudo-Euclidean distance uses a variance that is just the sum of monotonically increasing functions of the hyperbolic distance.
\end{proof}

\begin{lemma}
Given some pairwise distances $d_{i,j}$, if it is possible to embed the distances in a dimension-$r$ geodesic submanifold rooted and centered at a pseudo-Euclidean mean, then it is possible to embed the distances in a dimension-$r$ geodesic submanifold rooted and centered at a Karcher mean, and vice versa.
\end{lemma}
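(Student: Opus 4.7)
The plan is to leverage Lemma~\ref{lemmaSubmanifoldCentering} together with the basic fact that any geodesic submanifold can be re-rooted at any of its points via a hyperbolic isometry. Concretely, I would unpack the hypothesis: assume points $x_1, \ldots, x_n \in \mathbb{M}_r$ lie in a dimension-$r$ geodesic submanifold $S$ with $d_H(x_i, x_j) = d_{i,j}$, that a pseudo-Euclidean mean $p$ of the $x_i$'s is the root of $S$, and that $p \in S$. By Lemma~\ref{lemmaSubmanifoldCentering}, a Karcher mean $k$ of the $x_i$'s also lies in $S$.

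Next, I would invoke the characterization from Corollary A.5.5 of \cite{Benedetti} already cited in the excerpt: geodesic submanifolds of $\mathbb{M}_r$ are exactly the intersections of linear subspaces of $\R^{r+1}$ with the hyperboloid. This makes it routine to produce a hyperbolic isometry $\phi$ that sends $k$ to $e_0$ (the origin of $\mathbb{M}_r$); the image $\phi(S)$ is then a dimension-$r$ geodesic submanifold rooted at $\phi(k) = e_0$. The candidate new embedding is $\phi(x_1), \ldots, \phi(x_n) \in \phi(S)$.

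The remaining verification has two parts, both short. First, since $\phi$ is an isometry, $d_H(\phi(x_i), \phi(x_j)) = d_H(x_i, x_j) = d_{i,j}$, so the new embedding realizes the same distance matrix in the same dimension. Second, isometries preserve both mean notions: the Karcher variance $\sum_i d_H^2(z, x_i)$ and the pseudo-Euclidean variance $\Psi(z; x_1, \ldots, x_n) = \sum_i \sinh^2(d_H(x_i, z))$ each depend on $z$ only through the hyperbolic distances, so their local minima transform equivariantly under $\phi$. In particular $\phi(k)$ is a Karcher mean of the $\phi(x_i)$, giving an embedding in a dimension-$r$ geodesic submanifold rooted and centered at a Karcher mean.

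The reverse direction is symmetric: starting from an embedding rooted and centered at a Karcher mean, apply Lemma~\ref{lemmaSubmanifoldCentering} to locate a pseudo-Euclidean mean $p \in S$, then push $p$ to the origin by an isometry, using that $\Psi$ is also a function of distances alone. I do not expect a serious obstacle here; the only subtlety worth flagging is non-uniqueness of either mean (both are defined as local minima), but since the statement only asserts existence of \emph{some} valid embedding, selecting any mean and any corresponding isometry is enough.
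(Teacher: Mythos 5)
Your proof is correct and follows essentially the same route as the paper's: both invoke Lemma~\ref{lemmaSubmanifoldCentering} to place a Karcher mean and a pseudo-Euclidean mean inside the same submanifold $S$, then apply an isometry (the paper calls it a reflection) that sends the desired mean to the origin, using that isometries preserve distances and the dimension of geodesic submanifolds. The one thing you make explicit that the paper leaves implicit is the equivariance observation --- that because both means are defined purely in terms of $d_H$, an isometry carries a mean of $x_1,\ldots,x_n$ to a mean of the images --- which is actually a useful clarification rather than a divergence in approach.
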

\begin{proof}
Suppose that it is possible to embed the distances as some points $x_1, \ldots, x_k$ in a dimension-$r$ geodesic submanifold $S$.
Then, by Lemma~\ref{lemmaSubmanifoldCentering}, $S$ contains both a Karcher mean $\bar x$ and a pseudo-Euclidean mean $\bar x_P$ of these points.
If we reflect all the points such that $\bar x$ is reflected to the origin, then the new reflected points will also be an embedding of the distances (since reflection is isometric) and they will also be centered at the origin.
Furthermore, we know that they will still lie in a dimension-$r$ submanifold (now containing the origin) since reflection also preserves the dimension of geodesic submanifolds.
So the reflected points that we have constructed are an embedding of $d_{i,j}$ into a dimension-$r$ geodesic submanifold rooted and centered at a Karcher mean.
The same argument will show that (by reflecting $\bar x_P$ to the origin instead of $\bar x$) we can construct an embedding of $d_{i,j}$ into a dimension-$r$ geodesic submanifold rooted and centered at the pseudo-Euclidean mean.
This proves the lemma.
\end{proof}


\section{Perturbation Analysis}

\subsection{Handling Perturbations}
Now that we have shown that h-MDS recovers an embedding exactly, we
consider the impact of perturbations on the data. Given the necessity
of high precision for some embeddings, we expect that in some regimes
the algorithm should be very sensitive. Our results identify the
scaling of those perturbations.

First, we consider how to measure the effect of a perturbation on the
resulting embedding.  We measure the gap between two configurations of
points, written as matrices in $\mathbb{R}^{n \times r}$, by the sum
of squared differences $D(X,Y) = \tr((X-Y)^T(X-Y))$. Of course, this
is not immediately useful, since $X$ and $Y$ can be rotated or
reflected without affecting the distance matrix used for MDS--as these
are isometries, while scalings and Euclidean translations are
not. Instead, we measure the gap by
\[D_E(X,Y) = \inf \{D(X,PY) : P^T P = I\}.\]
In other words, we look for the configuration of $Y$ with the smallest
gap relative to $X$. For Euclidean MDS, \citet{Sibson1} provides an
explicit formula for $D_E(X,Y)$ and uses this formulation to build a
perturbation analysis for the case where $Y$ is a configuration
recovered by performing MDS on the perturbed matrix $XX^T+\Delta(E)$,
with $\Delta(E)$ symmetric.

\paragraph{Problem setup} In our case, the perturbations affect the hyperbolic distances. Let $H \in 
\mathbb{R}^{n \times  n}$ be the distance matrix for a set of points in hyperbolic space. Let $\Delta(H) \in \mathbb{R}^{n \times n}$ be the perturbation, with $H_{i,i}= 0$ and $\Delta(H)$ symmetric (so that $\hat H = H + \Delta_{H}$ remains symmetric).
The goal of our analysis is
to estimate the gap $D_E(X,Y)$ between $X$ recovered from $H$ with
h-MDS and $\hat X$ recovered from the perturbed distances
$H+\Delta(H)$.

\begin{lemma}
\label{lemma:hmds-perturb}
Under the above conditions, if $\lambda_{\min}$ denotes the smallest nonzero eigenvalue of $X X^T$ then up to second order in $\Delta(H)$,
\[
  D_E(X,\hat X) \le \frac{2 n^2}{\lambda_{\min}} \sinh^2\left( \| H \|_{\infty} \right) \| \Delta(H) \|_{\infty}^2.
\]
\end{lemma}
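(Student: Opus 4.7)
The plan is to decompose the analysis into two stages: first propagate the perturbation $\Delta(H)$ of the distance matrix through the nonlinear transform $\cosh$ to obtain a perturbation $E$ of the inner-product matrix $Y$ used by h-MDS, then apply a Sibson-style classical perturbation bound for the eigendecomposition that produces $X$ from $Y$.

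For the first stage, I would Taylor expand $\cosh(H_{ij} + \Delta(H)_{ij})$ around $H_{ij}$. To first order this gives
\[
  E_{ij} \;:=\; \hat Y_{ij} - Y_{ij} \;=\; \sinh(H_{ij})\,\Delta(H)_{ij} + O\bigl(\Delta(H)_{ij}^2\bigr),
\]
and since $H_{ij}\le \|H\|_\infty$ and $\sinh$ is increasing on $\mathbb{R}_{\ge 0}$, this yields the entrywise bound $|E_{ij}| \le \sinh(\|H\|_\infty)\,|\Delta(H)_{ij}|$, hence
\[
  \|E\|_F^2 \;\le\; \sum_{i,j} \sinh^2(\|H\|_\infty)\,\Delta(H)_{ij}^2 \;\le\; n^2\,\sinh^2(\|H\|_\infty)\,\|\Delta(H)\|_\infty^2.
\]
Symmetry of $\Delta(H)$ carries over to $E$, which is the hypothesis required by the classical MDS perturbation analysis.

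For the second stage, recall from Section~\ref{sec:exactmds} that h-MDS recovers $X$ by extracting the positive eigenspace of $-Y = XX^T - uu^T$, which (by pseudo-Euclidean centering, $X^T u = 0$) cleanly separates the $r$ positive eigenvalues of $XX^T$ from the single negative eigenvalue $-\|u\|^2$ contributed by $u$. I would follow the argument of Sibson~\cite{Sibson1,Sibson2}, in which the gap functional $D_E(X,\hat X) = \inf_{P^T P = I} \tr((X-PY)^T(X-PY))$ is expanded to second order in the symmetric perturbation $E$ of the Gram-like matrix, giving a bound of the form
\[
  D_E(X,\hat X) \;\le\; \frac{2}{\lambda_{\min}}\,\|E\|_F^2 \;+\; O(\|E\|^3),
\]
where $\lambda_{\min}$ is the smallest nonzero eigenvalue of $XX^T$. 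Substituting the first-stage bound on $\|E\|_F^2$ produces the claim.

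The main obstacle is the second stage: unlike ordinary MDS, where the relevant matrix $XX^T$ is positive semidefinite, here $-Y$ is indefinite, so one has to verify that Sibson's eigenvector-perturbation argument still applies when the positive part of the spectrum is separated from a negative eigenvalue. The cleanest route is to note that the pseudo-Euclidean centering $X^T u = 0$ makes $u$ exactly orthogonal to the column space of $X$, so the eigengap between the recovered positive eigenspace and everything else is at least $\lambda_{\min}$ (and in fact $\lambda_{\min} + \|u\|^2$ when the $-\|u\|^2$ eigenvalue is the dangerous one). With that gap in hand, the standard second-order perturbation expansion of orthogonal projectors onto an eigenspace goes through unchanged and yields the $1/\lambda_{\min}$ factor in the bound.
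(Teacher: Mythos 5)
Your proposal follows the same two-stage decomposition as the paper's proof: propagate $\Delta(H)$ through $\cosh$ to get an entrywise bound $|\hat Y_{ij} - Y_{ij}| \lesssim \sinh(\|H\|_\infty)|\Delta(H)_{ij}|$ (the paper uses the exact $\cosh$-difference identity rather than a Taylor expansion, but they coincide to the order claimed), then apply the Sibson second-order perturbation bound $D_E(X,\hat X) \le \tfrac{1}{2}\sum_{j,k}(v_j^T\Delta(Y)v_k)^2/(\lambda_j+\lambda_k) \le \tfrac{1}{2\lambda_{\min}}\|\Delta(Y)\|_F^2$ to the Gram factorization $-Y = XX^T - uu^T$ with $X^Tu=0$. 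Your closing remark about verifying that Sibson's argument survives the single negative eigenvalue $-\|u\|^2$ is a point the paper passes over silently with ``transfers to this setting,'' so your explicit eigengap justification is a welcome addition rather than a deviation.
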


The key takeaway is that this upperbound matches our intuition for the
scaling: if all points are close to one another, then $\|H\|_{\infty}$
is small and the space is approximately flat (since $\sinh^2(z)$ is
dominated by $2z^2$ close to the origin). On the other hand, points at great
distance are sensitive to perturbations in an absolute sense.

\begin{proof}[Proof of Lemma~\ref{lemma:hmds-perturb}]
Similarly to our development of h-MDS, we proceed by accessing the underlying Euclidean distance matrix, and then apply the perturbation analysis from \citet{Sibson2}. There are three steps: first, we get rid of the $\acosh$ in the distances to leave us with scaled Euclidean distances. Next, we remove the scaling factors, and apply Sibson's result.
Finally, we bound the gap when projecting to the Poincar{\'e} sphere.
 
\paragraph*{Hyperbolic to scaled Euclidean distortion} Let $Y$ denote the scaled-Euclidean distance matrix, as in (\ref{eq:hmds-Y}), so that $Y_{i,j} = \cosh(H_{i,j})$. Let $\hat Y_{i,j} = \cosh(H_{i,j} + \Delta(H)_{i,j})$.
We write $\Delta(Y) = \hat Y - Y$ for the scaled Euclidean version of the perturbation. We can use the hyperbolic-cosine difference formula on each term to write
\begin{align*}
  \Delta(Y)_{i,j}
  &=
  \cosh(\hat H_{i,j}) - \cosh(H_{i,j}) \\
  &=
  (\cosh(H_{i,j} + \Delta(H)_{i,j}) - \cosh(H_{i,j})) \\
  &=
  2\sinh\left( \frac{ 2 H_{i,j} + \Delta(H)_{i,j} }{2} \right) \sinh\left( \frac{ \Delta(H)_{i,j} }{2} \right).
\end{align*}
In terms of the infinity norm, as long as $\| H \|_{\infty} \ge \| \Delta(H) \|_{\infty}$ (it is fine to assume this because we are only deriving a bound up to second order, so we can suppose that $\Delta(H)$ is small), we can simplify this to
\begin{align*}
  \| \Delta(Y) \|_{\infty}
  &\le
  2\sinh\left( \| H \|_{\infty} \right) \sinh\left( \| \Delta(H) \|_{\infty} / 2 \right).
\end{align*}


{\bf Scaled Euclidean to Euclidean inner product}.
Recall that if $X$ is the embedding in the hyperboloid model, then $Y = uu^T - XX^T$ from equation~\eqref{eq:hmds-Y2}, and furthermore $X^T u = 0$ so that $X$ can be recovered through PCA.
Now we are in the Euclidean setting, and can thus measure the result of the perturbation on the recovered $X$.
The proof of Theorem 4.1 in \citet{Sibson2} transfers to this setting.
This result states that if $\hat X$ is the configuration recovered from the perturbed inner products, then, the lowest-order term of the expansion of the error $D_E(X,\hat X)$ in the perturbation $\Delta(Y)$ is
\[
  D_E(X,\hat X) = \frac{1}{2} \sum_{j,k} \frac{(v_j^T \Delta(Y) v_k)^2}{\lambda_j + \lambda_k}.
\]
Here, the $\lambda_i$ and $v_i$ are the eigenvalues and corresponding orthonormal eigenvectors of $XX^T$ and the sum is taken over pairs of $\lambda_{j}, \lambda_k$ that are not both 0.
Let $\lambda_{\min}$ be the smallest nonzero eigenvalue of $XX^T$. Then,
\begin{align*}
  D_E(X,\hat X) 
  &\le 
  \frac{1}{2 \lambda_{\min}} \sum_{j,k} (v_j^T \Delta(Y) v_k)^2
  \le
  \frac{1}{2 \lambda_{\min}} \| \Delta(Y) \|_F^2 \\
  &\le
  \frac{n^2}{2 \lambda_{\min}} \| \Delta(Y) \|_{\infty}^2.
\end{align*}
Combining this with the previous bounds, and restricting to second-order terms in $\| \Delta(H) \|_{\infty}^2$ proves Lemma~\ref{lemma:hmds-perturb} for the embedding $X$ in the hyperboloid model.
\end{proof}

\paragraph{Projecting to the Poincar{\'e} disk}

Algorithm~\ref{alg:new_hmds} initially finds an embedding in $\mathbb{M}_r$, but optionally converts it to the Poincar{\'e} disk.
To convert a point $x$ in the hyperboloid model to $z$ in the Poincar{\'e} disk, take $z = \frac{x}{1 + \sqrt{1 + \|x\|_2^2}}$.
Let $Z \in \R^{n \times r}$ be the projected embedding.
Now we show that the same perturbation bound holds after projection.

\begin{lemma}
  \label{lmm:project-perturb}
  For any $x$ and $y$,
  $ \left\| \frac{x}{1 + \sqrt{1 + \|x\|_2^2}} - \frac{x}{1 + \sqrt{1 + \|x\|_2^2}} \right\| \le \| x-y \| $
\end{lemma}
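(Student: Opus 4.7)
Let $\phi : \R^r \to \R^r$ be the projection map $\phi(x) = x / (1 + \sqrt{1 + \|x\|^2})$. The claim is that $\phi$ is $1$-Lipschitz (in fact, as we will see, $\tfrac{1}{2}$-Lipschitz). The natural strategy is to bound the operator norm of the Jacobian $D\phi(x)$ pointwise by $1$, and then conclude via the mean value inequality integrated along the straight segment from $x$ to $y$.

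To carry this out, write $\phi(x) = h(\|x\|)\, x$ with $h(r) = 1/(1 + \sqrt{1+r^2})$. Then
\[
  D\phi(x) \;=\; h(\|x\|)\, I \;+\; \frac{h'(\|x\|)}{\|x\|}\, x x^T,
\]
so $D\phi(x)$ has two distinct eigenspaces: the $(r-1)$-dimensional subspace orthogonal to $x$, on which $D\phi(x)$ acts as multiplication by the ``transverse'' eigenvalue $h(\|x\|)$, and the one-dimensional span of $x$, on which $D\phi(x)$ acts as the ``radial'' eigenvalue $h(\|x\|) + \|x\|\, h'(\|x\|)$.

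The heart of the calculation is to bound these two eigenvalues. Setting $s = \sqrt{1 + \|x\|^2} \ge 1$, the transverse eigenvalue is $h = 1/(1+s) \le \tfrac{1}{2}$. Using $h'(r) = -r / \bigl( (1+\sqrt{1+r^2})^2 \sqrt{1+r^2} \bigr)$ and substituting $\|x\|^2 = s^2 - 1$, the radial eigenvalue simplifies to
\[
  \frac{1}{1+s} - \frac{s^2 - 1}{(1+s)^2 s} \;=\; \frac{1}{1+s} - \frac{s-1}{s(1+s)} \;=\; \frac{1}{s(1+s)} \;\le\; \frac{1}{2}.
\]
Both eigenvalues are nonnegative and at most $\tfrac{1}{2}$, hence $\|D\phi(x)\|_{\mathrm{op}} \le \tfrac{1}{2} \le 1$ for every $x$.

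Finally, apply the mean value inequality to $t \mapsto \phi(x + t(y-x))$ for $t \in [0,1]$ to conclude
\[
  \|\phi(x) - \phi(y)\| \;\le\; \sup_{t \in [0,1]} \|D\phi(x + t(y-x))\|_{\mathrm{op}}\, \|x - y\| \;\le\; \|x - y\|,
\]
which is the claim. The only nonroutine part is the algebraic simplification of the radial eigenvalue; aside from that the argument is entirely mechanical.
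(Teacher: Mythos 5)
Your proof is correct, and it takes a genuinely different route from the paper's. The paper argues purely algebraically: setting $u_x = \sqrt{1+\|x\|^2}$ and $u_y = \sqrt{1+\|y\|^2}$, it rewrites the difference as
\[
\phi(x)-\phi(y)=\frac{(x-y)(1+u_y)+y(u_y-u_x)}{(1+u_x)(1+u_y)},
\]
then applies the triangle inequality together with $\|y\|\le u_y$, $|u_y-u_x|\le\|x-y\|$, and $1+u_x\ge 2$, obtaining the bound $\|\phi(x)-\phi(y)\|\le \tfrac{2}{1+u_x}\|x-y\|\le\|x-y\|$. Your argument instead bounds the operator norm of $D\phi$ and integrates along the segment. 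It is slightly less elementary (it invokes the mean value inequality and implicitly the fact that $\phi$ is $C^1$, including at the origin, which holds because $1+\sqrt{1+\|x\|^2}$ is smooth and bounded away from zero), but it is structurally cleaner and yields the strictly sharper statement that $\phi$ is $\tfrac12$-Lipschitz. The eigenvalue computation checks out: with $s=\sqrt{1+\|x\|^2}\ge 1$, the Jacobian $h(\|x\|)I+\tfrac{h'(\|x\|)}{\|x\|}xx^T$ has transverse eigenvalue $\tfrac{1}{1+s}\le\tfrac12$ and radial eigenvalue $\tfrac{1}{s(1+s)}\le\tfrac12$, both nonnegative, so $\|D\phi\|_{\mathrm{op}}\le\tfrac12$ everywhere. (Incidentally, both you and the paper correctly read the obvious typo in the lemma statement, where the second term should have $y$ in place of $x$.)
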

\begin{proof}
  Let $u_x = \sqrt{1 + \|x\|^2}$ and define $u_y$ analogously.
  Note that $u_x \ge 2$, $u_x \ge \|x\|$, and
  \[
    u_y - u_x = \frac{u_y^2 - u_x^2}{u_y + u_x} = (\|y\|-\|x\|)\frac{\|y\|+\|x\|}{u_y + u_x} \le \|y\|-\|x\|.
  \]
  Combining these facts leads to the bound
  \begin{align*}
    \left\| \frac{x}{1 + \sqrt{1 + \|x\|_2^2}} - \frac{y}{1 + \sqrt{1 + \|y\|_2^2}} \right\| 
    &= \left\| \frac{x-y + x u_y - y u_y + y u_y - y u_x}{(1+u_x)(1+u_y)} \right\|
    \\&= \left\| \frac{(x-y)(1+u_y) + y(u_y - u_x)}{(1+u_x)(1+u_y)} \right\|
    \\&= \left\| \frac{x-y}{1+u_x} + \frac{y}{1+u_y}\frac{u_y-u_x}{1+u_x} \right\|
    \\&\le \frac{\left\| x-y \right\|}{1+u_x} + \frac{\left\| u_y-u_x \right\|}{1+u_x}
    \\&\le \left\| x-y \right\|.
  \end{align*}
\end{proof}

Lemma~\ref{lmm:project-perturb} is equivalent to the statement that $D(z, \hat z) \le D(x, \hat x)$ where $z, \hat z$ are the projections of $x, \hat x$.
Since orthogonal matrices $P$ preserve $\ell_2$ norm, $P\hat z$ is the projection of $P \hat x$ so $D(z, P \hat z) \le D(x, P \hat x)$ for any $P$.
Finally, $D(Z, P\hat Z)$ is just a sum over all columns and therefore $D(Z, P\hat Z) \le D(X, P\hat X)$.
This implies that $D_E(Z, \hat Z) \le D_E(X, \hat X)$ as desired.

\paragraph{The hyperbolic gap}
The gap $D(X,\hat X)$ can be written as a sum $\sum d_E(x_i, \hat{x}_i)^2$ over the vectors (columns) of $X,\hat X$.
We can instead ask about the hyperbolic gap
\[
  D_H(X, \hat X) = \inf \left\{ \sum d_H(x_i, P\hat{x}_i)^2 : P^T P = I \right\},
\]
which is a better interpretation of the perturbation error when recovering hyperbolic distances.

Note that for any points $x,y$ in the Gans model, we have
\[
  d_H(x,y) = \acosh\left( \sqrt{1 + \|x\|^2}\sqrt{1 + \|y\|^2} - \langle x,y \rangle \right) \le \acosh\left( \frac{2 + \|x\|^2 + \|y\|^2}{2} - \langle x,y \rangle \right) = \acosh\left( 1 + \frac{1}{2}\|x-y\|^2 \right).
\]
Furthermore, the function $\acosh(1 + t^2/2) - t$ is always negative except in a tiny region around $t=0$ (and attains a maximum here on the order of $10^{-10}$),
so effectively $\acosh\left( 1 + \frac{1}{2}\|x-y\|^2 \right) \le \|x-y\| = d_E(x,y)$,
and the same bound in Lemma~\ref{lemma:hmds-perturb} carries over to the hyperbolic gap.


\section{Proof of Lemma~\ref{lemma:pga}}

In this section, we prove Lemma~\ref{lemma:pga}, which gives a setting under which we can guarantee that the hyperbolic PGA objective is locally convex.

\begin{proof}[Proof of Lemma~\ref{lemma:pga}]
We begin by considering the component function
\[
  f_i(\gamma) = \acosh^2(1 + d_E^2(\gamma, v_i)).
\]
Here, the $\gamma$ is a geodesic through the origin.
We can identify this geodesic on the Poincar{\'e} disk with a unit vector $u$ such that $\gamma(t) = (2t-1) u$.
In this case, simple Euclidean projection gives us
\[
  d_E^2(\gamma, v_i) = \| (I - u u^T) v_i \|^2.
\]
Optimizing over $\gamma$ is equivalent to optimizing over $u$, and so
\[
  f_i(u) = \acosh^2\left(1 + \| (I - u u^T) v_i \|^2 \right).
\]
If we define the functions
\[
  h(\gamma) = \acosh^2(1 + \gamma)
\]
and
\[
  R(u) = \| (I - u u^T) v_i \|^2 =  \| v_i \|^2 - (u^T v_i)^2
\]
then we can rewrite $f_i$ as
\[
  f_i(u) = h(R(u)).
\]
Now, optimizing over $u$ is an geodesic optimization problem on the hypersphere.
Every goedesic on the hypersphere can be isometrically parameterized in terms of an angle $\theta$ as
\[
  u(\theta) = x \cos(\theta) + y \sin(\theta)
\]
for orthogonal unit vectors $x$ and $y$.
Without loss of generality, suppose that $y^T v_i = 0$ (we can always choose such a $y$ because there will always be some point on the geodesic that is orthogonal to $v_i$).
Then, we can write
\[
  R(\theta)
  =
  \| v_i \|^2 - (x^T v_i)^2 \cos^2(\theta)
  =
  \| v_i \|^2 - (x^T v_i)^2 + (x^T v_i)^2 \sin^2(\theta).
\]
Differentiating the objective with respect to $\theta$,
\begin{align*}
  \frac{d}{d \theta} h(R(\theta))
  &=
  h'(R(\theta)) R'(\theta) \\
  &=
  2 h'(R(\theta)) \cdot (v_i^T x)^2 \cdot \sin(\theta) \cos(\theta).
\end{align*}
Differentiating again,
\begin{align*}
  \frac{d^2}{d \theta^2} h(R(\theta))
  &=
  4 h''(R(\theta)) \cdot (v_i^T x)^4 \cdot \sin^2(\theta) \cos^2(\theta)
  +
  2 h'(R(\theta)) \cdot (v_i^T x)^2 \cdot \left( \cos^2(\theta) - \sin^2(\theta) \right).
\end{align*}
Now, suppose that we are interested in the Hessian at a point $z = x \cos(\theta) + y \sin(\theta)$ for some fixed angle $\theta$.
Here, $R(\theta) = R(z)$, and as always $v_i^T z = v_i^T x \cos(\theta)$, so
\begin{align*}
  \frac{d^2}{d \theta^2} h(R(\theta)) \big|_{u(\theta) = z}
  &=
  4 h''(R(\theta)) \cdot (v_i^T x)^4 \cdot \sin^2(\theta) \cos^2(\theta)
  +
  2 h'(R(\theta)) \cdot (v_i^T x)^2 \cdot \left( \cos^2(\theta) - \sin^2(\theta) \right) \\
  &=
  4 h''(R(z)) \cdot \frac{ (v_i^T z)^4 }{\cos^4(\theta)} \cdot \sin^2(\theta) \cos^2(\theta)
  +
  2 h'(R(z)) \cdot \frac{ (v_i^T x)^2 }{\cos^2(\theta)} \cdot \left( \cos^2(\theta) - \sin^2(\theta) \right) \\
  &=
  4 h''(R(z)) \cdot (v_i^T z)^4 \cdot \tan^2(\theta)
  +
  2 h'(R(z)) \cdot (v_i^T z)^2 \cdot \left( 1 - \tan^2(\theta) \right) \\
  &=
  2 h'(R(z)) \cdot (v_i^T z)^2 
  +
  \left(
    4 h''(R(z)) \cdot (v_i^T z)^4
    -
    2 h'(R(z)) \cdot (v_i^T z)^2
  \right)
  \tan^2(\theta).
\end{align*}
But we know that since $h$ is concave and increasing, this last expression in parenthesis must be negative.
It follows that a lower bound on this expression for fixed $z$ will be attained when $\tan^2(\theta)$ is maximized.
For any geodesic through $z$, the angle $\theta$ is the distance along the geodesic to the point that is (angularly) closest to $v_i$.
By the Triangle inequality, this will be no greater than the distance $\theta$ along the Geodesic that connects $z$ with the normalization of $v_i$.
On this worst-case geodesic,
\[
  v_i^T z = \|v_i\| \cos(\theta),
\]
and so
\[
  \cos^2(\theta) = \frac{(v_i^T z)^2}{\|v_i\|^2}
\]
and
\[
  \tan^2(\theta) = \sec^2(\theta) - 1 = \frac{\|v_i\|^2}{(v_i^T z)^2} - 1 = \frac{R(z)}{(v_i^T z)^2}.
\]
Thus, for any geodesic, for the worst-case angle $\theta$,
\begin{align*}
  \frac{d^2}{d \theta^2} h(R(\theta)) \big|_{u(\theta) = z}
  &\ge
  2 h'(R(z)) \cdot (v_i^T z)^2 
  +
  \left(
    4 h''(R(z)) \cdot (v_i^T z)^4
    -
    2 h'(R(z)) \cdot (v_i^T z)^2
  \right)
  \tan^2(\theta) \\
  &=
  2 h'(R(z)) \cdot (v_i^T z)^2 
  +
  \left(
    4 h''(R(z)) \cdot (v_i^T z)^2
    -
    2 h'(R(z))
  \right)
  R(z).
\end{align*}
From here, it is clear that this lower bound on the second derivative (and as a consequence local convexity) is a function solely of the norm of $v_i$ and the residual to $z$.
From simple evaluation, we can compute that
\[
  h'(\gamma) = 2 \frac{\acosh(1+\gamma)}{\sqrt{\gamma^2 + 2\gamma}}
\]
and
\[
  h''(x)
  =
  2 \frac{
    \sqrt{\gamma^2 + 2\gamma}
    -
    (1 + \gamma) \acosh(1 + \gamma)
  }{
    (\gamma^2 + 2\gamma)^{3/2}
  }.
\]
As a result
\begin{align*}
  4 \gamma h''(\gamma) + h'(\gamma)
  &=
  8 \frac{
    \gamma \sqrt{\gamma^2 + 2\gamma}
    -
    (\gamma^2 + \gamma) \acosh(1 + \gamma)
  }{
    (\gamma^2 + 2\gamma)^{3/2}
  }
  +
  2 \frac{
    (\gamma^2 + 2\gamma) \acosh(1+\gamma)
  }{
    (\gamma^2 + 2\gamma)^{3/2}
  } \\
  &=
  2 \frac{
    4 \gamma \sqrt{\gamma^2 + 2\gamma}
    -
    4 (\gamma^2 + \gamma) \acosh(1 + \gamma)
    +
    (\gamma^2 + 2\gamma) \acosh(1+\gamma)
  }{
    (\gamma^2 + 2\gamma)^{3/2}
  } \\
  &=
  2 \frac{
    4 \gamma \sqrt{\gamma^2 + 2\gamma}
    -
    (3 \gamma^2 + 2 \gamma) \acosh(1 + \gamma)
  }{
    (\gamma^2 + 2\gamma)^{3/2}
  }.
\end{align*}
For any $\gamma$ that satisfies $0 \le \gamma \le 1$,
\[
  4 \gamma \sqrt{\gamma^2 + 2\gamma}
  \ge
  (3 \gamma^2 + 2 \gamma) \acosh(1 + \gamma)
\]
and so
\[
  4 \gamma h''(\gamma) + h'(\gamma) \ge 0.
\]
Thus, if $0 \le R(z) \le 1$,
\begin{align*}
  \frac{d^2}{d \theta^2} h(R(\theta)) \big|_{u(\theta) = z}
  &\ge
  2 h'(R(z)) \cdot (v_i^T z)^2 
  +
  \left(
    4 h''(R(z)) \cdot (v_i^T z)^2
    -
    2 h'(R(z))
  \right)
  R(z) \\
  &=
  h'(R(z)) \cdot (v_i^T z)^2 
  +
  \left(
    4 h''(R(z)) \cdot R(z)
    +
    h'(R(z))
  \right) \cdot (v_i^T z)^2 
  -
  2 h'(R(z)) \cdot R(z) \\
  &\ge
  h'(R(z)) \cdot (v_i^T z)^2 
  -
  2 h'(R(z)) \cdot R(z) \\
  &=
  h'(R(z)) \cdot \left( \| v_i \|^2 - R(z) \right)
  -
  2 h'(R(z)) \cdot R(z) \\
  &=
  h'(R(z)) \cdot \left( \| v_i \|^2 - 3 R(z) \right).
\end{align*}
Thus, a sufficient condition for convexity is for (as we assumed above) $R(z) \le 1$ and
\[
  \| v_i \|^2 \ge 3 R(z).
\]
Combining these together shows that if
\[
  \acosh^2\left(1 + d_E(\gamma, v_i)^2 \right)
  =
  R(z)
  \le
  \min\left(1, \frac{1}{3} \| v_i \|^2 \right)
\]
then $f_i$ is locally convex at $z$.
The result of the lemma now follows from the fact that $f$ is the sum of many $f_i$ and the sum of convex functions is also convex.
\end{proof}

\section{Experimental Results}
In this section, we provide some additional experimental results. We also present results on an additional less tree-like graph (a search engine query response graph for the search term `California' \cite{ca-data}.)
 
\paragraph*{Combinatorial Construction: Parameters}
To improve the intuition behind the combinatorial construction, we report some additional parameters used by the construction. For each of the graphs, we report the maximum degree, the scaling factor $\nu$ that the construction used (note how these vary with the size of the graph and the maximal degree), the time it took to perform the embedding, in seconds, and the number of bits needed to store a component for $\varepsilon=0.1$ and $\varepsilon=1.0$.

\begin{table*}[h]
\centering
\begin{tabular}{|l|c|c||c|c|c|c|c|c|} \hline
                              &               &                  &                            &   &              \multicolumn{2}{c|}{$\varepsilon=0.1$} &   \multicolumn{2}{c|}{$\varepsilon=1.0$} \\ \hline
Dataset     	          &  Nodes & Edges   &   $d_{\max}$ & Time    & Scaling Factor &  Precision      & Scaling Factor &  Precision \\ \hline\hline
Bal. Tree 1          & 40  	      &  39          & 4             &  3.78      & 23.76                & 102   & 4.32 & 18 \\ \hline
Phylo. Tree          & 344      & 343        & 16           & 3.13       & 55.02               & 2361  &  10.00   &  412   \\ \hline \hline
WordNet              & 74374 &  75834  &  404       & 1346.62 & 126.11           & 2877  & 22.92 & 495\\ \hline
CS PhDs              & 1025     &  1043    &  46          &  4.99       & 78.30               & 2358  & 14.2 & 342 \\ \hline \hline
Diseases              & 516      & 1188     & 24          &    3.92       & 63.97             & 919 & 13.67 & 247  \\ \hline
Protein - Yeast   & 1458   & 1948     & 54           &  6.23       &  81.83             & 1413  & 15.02 & 273 \\ \hline \hline
Gr-QC                   & 4158    &  13428  & 68          & 75.41      & 86.90             &  1249 &  16.14 & 269 \\ \hline
California           & 5925     &   15770 &  105      & 114.41   & 96.46               & 1386  & 19.22 & 245\\ \hline
\end{tabular}
\caption{Combinatorial construction parameters and results.}
\label{table:comb_setup}
\end{table*}

\begin{table*}[]
\centering
\begin{tabular}{|c||c|c|c||c|c|c||c|c|c|}
\hline   
& \multicolumn{3}{c|}{MAP} &   \multicolumn{3}{c|}{2-MAP} &  \multicolumn{3}{c|}{$d_{avg}$}  \\ \hline
Rank       & h-MDS    & PCA      & FB      & h-MDS    & PCA      &  FB  & h-MDS    & PCA   & FB \\    \hline     \hline
Rank 2    & 0.346      &  0.614  &  {\bf 0.718}         & 0.754      &  {\bf 0.874}  &       0.802       & {\bf0.317}     &0.888  & 0.575  \\ \hline
Rank 5    & 0.439      & 0.627   &  {\bf 0.761}         & 0.844      &   0.905 & {\bf 0.950}       &{\bf 0.083}      &0.833 &  0.583 \\ \hline
Rank 10  & 0.471      & 0.632   &  {\bf 0.777}        & 0.857      &   0.912 &  {\bf 0.953}      & {\bf 0.048}      &0.804  & 0.586\\  \hline
Rank 50  & 0.560      & 0.687   &  {\bf 0.784}          & 0.880      &  0.962  &  {\bf 0.974}      & {\bf 0.036}     &0.768 & 0.584 \\ \hline
Rank 100 &0.645      & 0.698   &   {\bf 0.795}        & 0.926      &  {\bf 0.999}  &   0.981     &  {\bf 0.036}     &0.760 & 0.583 \\ \hline
Rank 200 &0.823      & {\bf 1.0}       &   0.811           & 0.968      & {\bf 1.0}      &   0.986      & {\bf 0.039}      & 0.746 & 0.583\\ \hline
\end{tabular}
\caption{Phylogenetic tree dataset.  Variation with rank, measured with MAP, 2-MAP, and $d_{avg}$. }
\label{table:rank_results}
\end{table*}

\paragraph*{Hyperparameter: Effect of Rank}
We also considered the influence of the dimension on the perfomance of
h-MDS, PCA, and FB. On the Phylogenetic tree dataset, we measured
distortion and MAP metrics for dimensions of 2,5,10,50,100, and
200. The results are shown in Table~\ref{table:rank_results}. We
expected all of the techniques to improve with better rank, and this
was the case as well. Here, the optimization-based approach typically
produces the best MAP, optimizing the fine details accurately. We
observe that the gap is closed when considering 2-MAP (that is, MAP
where the retrieved neighbors are at distance up to 2 away). In
particular we see that the main limitation of h-MDS is at the finest
layer, confirming the idea MAP is heavily influenced by local
changes. In terms of distortion, we found that h-MDS offers
good performance even at a very low dimension ($0.083$ at 5 dimensions).

\paragraph*{Precision Experiment}  (cf Table~\ref{table:mds-precision}).
Finally, we considered the effect of precision on h-MDS for a balanced tree and fixed dimension 10.
\begin{table}[ht!]
\centering
\begin{tabular}{|c||c|c|}
\hline 
  Precision   & $D_{avg}$ & MAP \\    \hline   
128 & 0.357  & 0.347 \\ \hline
256 & 0.091 &0.986 \\ \hline
512 & 0.076 & 1.0  \\  \hline
1024 &0.064 & 1.0  \\ \hline
\end{tabular}
\caption{h-MDS recovery at different precision levels for a $3$-ary tree and rank 10.}
\label{table:mds-precision}
\end{table}

\end{document}